\providecommand{\tabularnewline}{\\}
\providecommand{\citeauthor}[1]{\citeA{#1}}
\newcommand{\citet}[1]{\citeauthor{#1} \shortcite{#1}}
\newcommand{\citep}[1]{\cite{#1}}
\renewcommand{\newdef}[2]{\newtheorem{#1}{#2}}
\newtheorem{theorem}{Theorem}
\newtheorem{lemma}{Lemma}
\newtheorem{corollary}{Corollary}
\theoremstyle{definition}
\providecommand{\joint}[1]{#1}
\providecommand{\argsA}[2]{ {#1}_{#2} } 
\providecommand{\argsG}[2]{ {\joint{#1}}_{#2} } 
\providecommand{\argsT}[2]{ {#1}^{#2} } 
\providecommand{\argsI}[2]{ {#1}^{#2} } 
\providecommand{\argsAI}[3]{ {#1}_{#2}^{#3} }
\providecommand{\argsAT}[3]{ {#1}_{#2}^{#3} }
\providecommand{\argsGT}[3]{ {\joint{#1}}_{#2}^{#3} } 
\providecommand{\argsIT}[3]{ {#1}^{#2,#3} }
\providecommand{\argsAIT}[4]{ {#1}_{#2}^{#3,#4} }
\providecommand{\agNONE}{} 
\providecommand{\hor}{h}
\providecommand{\V}{V}
\providecommand{\Q}{Q}
\providecommand{\RA}[1]     {\argsA{R}{#1}}
\providecommand{\QT}[1]     {\argsT{Q}{#1}}
\providecommand{\IndF}      { \boldsymbol{1} }
\providecommand{\IndFo}[1]  { \IndF_{ \{ #1 \} } }              
\providecommand{\lbA}[1]    {\argsAT{b}{#1}{l}}
\providecommand{\gbA}[1]    {\argsAT{b}{#1}{g}}
\providecommand{\ptss}          {\sigma}
\providecommand{\ptssIT}[2]     {\argsIT{\ptss}{#1}{#2}}
\providecommand{\NMF}        {y}                                 
\providecommand{\MF}         {x}                                 
\providecommand{\MFn}        {xn}                               
\providecommand{\MFl}        {xl}                               
\providecommand{\PF}         {xPRIVATE}                              
\providecommand{\mfIT}[2]   {\argsIT{\MF}{#1}{#2}}
\providecommand{\mfAIT}[3]  {\argsAIT{\MF}{#1}{#2}{#3}}
\providecommand{\mfAT}[2]   {\argsAT{\joint{\MF}}{#1}{#2}}
\providecommand{\mfl}[1]    {{\MFl}}
\providecommand{\mflAT}[2]  {\argsAT{\joint{\MFl}}{#1}{#2}}
\providecommand{\mfn}[1]    {{\MFn}}
\providecommand{\mfnAT}[2]   {\argsAT{\joint{\MFn}}{#1}{#2}}
\providecommand{\pfA}[1]    {\argsA {\joint{\PF}}{#1}}    
\providecommand{\pfAT}[2]   {\argsAT{\joint{\PF}}{#1}{#2}}
\providecommand{\nmfA}[1]   {\argsA{\joint{\NMF}}{#1}}
\providecommand{\nmfAT}[2]  {\argsAT{\joint{\NMF}}{#1}{#2}}
\providecommand{\INFL}        {I} 
\providecommand{\iffunc}    {\INFL}                               
\providecommand{\ifpiAT}[2] {\argsAT{\INFL}{\rightarrow#1}{#2}}  
\providecommand{\ifsAT}[2]  {\argsAT    {\joint{\IFSOURCE}}{\rightarrow #1}{#2}} 
\providecommand{\DSET}          {\joint{D}}
\providecommand{\dsetAT}[2]     {\argsAT{\DSET}{#1}{#2}}
\providecommand{\sect}{Section~}
\providecommand{\fig}{Fig.~}
\providecommand{\tab}{Table~}
\providecommand{\app}{Appendix~}
\providecommand{\lem}{Lemma~}
\providecommand{\thm}{Theorem~}
\providecommand{\cor}{Corollary~}
\providecommand{\defas}     {\operatorname{\triangleq}}
\providecommand{\inprod}[2]{{#1} \cdot {#2}}
\newcommand{\E}{\mathbf{E}}
\providecommand{\reals}{\mathbb {R} }
\providecommand{\algName}[1]{{\sc{#1}}}
\providecommand{\problemName}[1]{\textsc{#1}}
\newcommand{\set}[1]{\mathcal{#1}}
\providecommand{\joint}[1]{\boldsymbol{#1}}
\providecommand{\discount}{\gamma}
\providecommand{\ts}{t}             
\providecommand{\agentSymb}{D}
\providecommand{\agentS}{\mathcal{\agentSymb}}
\providecommand{\agentI}[1]{{#1}}
\providecommand{\nrA}{n} 
\providecommand{\excl}[1]{-{#1}}
\providecommand{\Aug}[1]        {\bar{#1}}
\providecommand{\sAugA}[1]      {\argsA{\Aug{s}}{#1}}
\providecommand{\sAugAT}[2]     {\argsAT{\Aug{s}}{#1}{#2}}
\providecommand{\sS}{\set{S}}   
\providecommand{\s}{s}  
\providecommand{\sT}[1]{\argsT{\s}{#1}}
\providecommand{\nrSF}{m} 
\providecommand{\factorSymb}{X}
\providecommand{\factorSetSymb}{\set{X}}    
\providecommand{\sfacS}     {\factorSetSymb}                    
\providecommand{\sfac}      {\factorSymb}                       
\providecommand{\sfacI}[1]  {\argsI{\factorSymb}{#1}}           
\providecommand{\sfacIT}[2] {\argsIT{\factorSymb}{#1}{#2}}      
\providecommand{\sfacGT}[2] {\argsGT{\factorSymb}{#1}{#2}}      %
\providecommand{\factorValueSymb}{x}
\providecommand{\sfacv}         {\factorValueSymb}                  
\providecommand{\sfacvIS}[1]    {Dom(\sfacI{#1})}                   
\providecommand{\sfacvIT}[2]    {\argsIT{\factorValueSymb}{#1}{#2}}
\providecommand{\sfacvG}[1]     {\argsG{\factorValueSymb}{#1}}
\providecommand{\AC}{a}                 
\providecommand{\ACS}{\set{A}}                     
\providecommand{\aA}[1] {\argsA {\AC}   {#1}}
\providecommand{\aAS}[1]{\argsA {\ACS}  {#1}}      
\providecommand{\aAT}[2]{\argsAT{\AC}   {#1}{#2}}
\providecommand{\ja}    {\joint {\AC}}          
\providecommand{\jaS}   {\joint {\ACS}}         
\providecommand{\jaT}[1]{\argsT {\ja}   {#1}}   
\providecommand{\jaG}[1]    {\argsG{\AC}    {#1}} 
\providecommand{\jaGT}[2]   {\argsGT{\AC}   {#1}{#2}} 
\providecommand{\jaGS}[1]   {\argsG{\ACS}   {#1}} 
\providecommand{\OB}{o}                
\providecommand{\OBS}{\set{O}}
\providecommand{\oA}[1] {\argsA {\OB}   {#1}}
\providecommand{\oAS}[1]{\argsA {\OBS}  {#1}}      
\providecommand{\oAT}[2]{\argsAT{\OB}   {#1}{#2}}
\providecommand{\jo}        {\joint {\OB}}      
\providecommand{\joS}       {\joint {\OBS}}    
\providecommand{\joT}[1]    {\argsT {\jo}   {#1}}
\providecommand{\joG}[1]    {\argsG {\OB}   {#1}}      
\providecommand{\joGS}[1]   {\argsG {\OBS}   {#1}}      
\providecommand{\joGT}[2]   {\argsGT{\OB}   {#1}{#2}}      
\providecommand{\Tfunc}     {T}
\providecommand{\Ofunc}     {O}
\providecommand{\REWF}{R}   
\providecommand{\R}     {\REWF}                     
\providecommand{\RA}[1] {\argsA {\REWF}   {#1}}     
\providecommand{\RS}    {\set{R}} 
\providecommand{\RI}[1] {\argsI {\REWF} {#1}}       
\providecommand{\nrR}   {\rho}                      
\providecommand{\bSymbol}           {{b}}
\providecommand{\jointBeliefSymbol} {\bSymbol} 
\providecommand{\jb}                {\jointBeliefSymbol}
\providecommand{\jbT}[1]            {\argsT{\jb}{#1}}
\providecommand{\bO}                {\argsT\bSymbol{0}} 
\providecommand{\bA}[1]             {\argsA\bSymbol{#1}}
\providecommand{\gbA}[1]             {\argsAT\bSymbol{#1}{g}}
\providecommand{\AOH}{\vec{\theta}}
\providecommand{\aoHistAT}[2]   {\argsAT  {\AOH}{#1}{\,#2}}
\providecommand{\aoHistGT}[2]   {\JAOH\argsGT{}{#1}{#2}}
\providecommand{\OH}{\vec{o}}
\providecommand{\JOH}{\vec{\joint{o}}{}}
\providecommand{\oHistT}[1]    {\JOH\argsT{}{\;#1}}
\providecommand{\oHistAT}[2]   {\argsAT  {\OH}{#1}{\,#2}}
\providecommand{\oHistGT}[2]   {\argsGT\JOH{#1}{\;#2}}
\providecommand{\AH}{\vec{a}}
\providecommand{\aHistAT}[2]   {\argsAT  {\AH}{#1}{\,#2}}
\providecommand{\POL}{\pi}
\providecommand{\jpol}      {\joint{\POL}}	                
\providecommand{\jpolG}[1]  {\argsG     {\joint{\POL}}{#1}}	
\providecommand{\polA}[1]   {\argsA{\POL}{#1}}	
\providecommand{\DR}{\delta}
\providecommand{\drAT}[2]{\argsAT{\DR}{#1}{#2}}
\providecommand{\jdrGT}[2]{\argsGT{\DR}{#1}{#2}}
\providecommand{\pP}{\varphi}	
\providecommand{\pJPolGT}[2]    {\argsGT{\pJPol}{#1}{#2}}	
\providecommand{\pPolAT}[2]     {\argsAT{\pP}{#1}{#2}}
\def\<#1>{%
    \expandafter\ifx\csname<#1>\endcsname\relax
        \errmessage{abbreviation <#1> undefined!}
    \else
        \csname<#1>\endcsname
    \fi
}
\def\abbr#1#2{%
    \expandafter\def\csname<#1>\endcsname{#2}%
}
\begin{document}

\title{Influence-Optimistic Local Values \\
for Multiagent Planning --- Extended Version}

\author{Frans A.~Oliehoek\\
University of Amsterdam \\
University of Liverpool\\
\email{fao@liverpool.ac.uk} \and Matthijs T.~J.~Spaan\\
Delft University of Technology\\
The Netherlands\\
\email{m.t.j.spaan@tudelft.nl} \and Stefan Witwicki\\
Swiss Federal Institute \\
of Technology (EPFL)\\
\email{stefan.witwicki@epfl.ch}}

\maketitle

\noindent

\global\long\def\discount{\gamma}
\global\long\def\ts{t}
\global\long\def\excl#1{\neq{#1}}

\global\long\def\s{s}

\global\long\def\sT#1{{\argsT{s}{#1}}}

\global\long\def\Aug#1{\bar{{#1}}}

\global\long\def\sAugAT#1#2{{\argsAT{\Aug{\s}}{#1}{#2}}}

\global\long\def\sAugA#1{{\argsA{\bar{s}}{#1}}{}}

\global\long\def\sAugAT#1#2{{\argsAT{\bar{s}}{#1}{#2}}}

\global\long\def\aA#1{{\argsA{a}{#1}}}

\global\long\def\aAT#1#2{{\argsAT{a}{#1}{#2}}}

\global\long\def\oA#1{{\argsA{o}{#1}}}

\global\long\def\oAT#1#2{{\argsAT{o}{#1}{#2}}}

\global\long\def\ja{{\joint{\AC}}}

\global\long\def\jaT#1{{\argsT{\ja}{#1}}}

\global\long\def\jaG#1{{\argsG{\AC}{#1}}}

\global\long\def\jaGT#1#2{{\argsGT{\AC}{#1}{#2}}}

\global\long\def\joT#1{{\argsT{\jo}{#1}}}

\global\long\def\joGT#1#2{{\argsGT{\OB}{#1}{#2}}}

\global\long\def\aHistAT#1#2{{\argsAT{\AH}{#1}{\,#2}}}

\global\long\def\oHistAT#1#2{{\argsAT{\OH}{#1}{\,#2}}}

\global\long\def\aoHistAT#1#2{{\argsAT{\AOH}{#1}{\,#2}}}

\global\long\def\aoHistGT#1#2{{\argsGT{\AOH}{#1}{\,#2}}}

\global\long\def\polA#1{\argsA{\POL}{#1}}

\global\long\def\drAT#1#2{\argsAT{\DR}{#1}{#2}}

\global\long\def\jdrGT#1#2{\argsAT{\joint\DR}{#1}{#2}}

\global\long\def\pPolAT#1#2{\argsAT{\pP}{#1}{#2}}

\global\long\def\pJPolGT#1#2{\argsAT{\joint\pP}{#1}{#2}}

\global\long\def\jpolG#1{\argsG{\joint{\POL}}{#1}}

\global\long\def\bA#1{\argsA{b}{#1}}

\global\long\def\bO{{b^{0}}}

\global\long\def\lbA#1{\argsAT{b}{#1}{l}}

\global\long\def\gbA#1{\argsAT{b}{#1}{g}}

\global\long\def\RA#1{\argsA{R}{#1}}

\global\long\def\V{V}

\global\long\def\Q{Q}

\global\long\def\QT#1{\argsT{Q}{#1}}

\global\long\def\sfacS{\factorSetSymb}

\global\long\def\sfacIT#1#2{\argsIT{\factorSymb}{#1}{#2}}

\global\long\def\sfacGT#1#2{\argsGT{\factorSymb}{#1}{#2}}

\global\long\def\factorValueSymb{x}

\global\long\def\sfacvIT#1#2{\argsIT{\factorValueSymb}{#1}{#2}}

\global\long\def\sfacGT#1#2{\argsGT{\factorValueSymb}{#1}{#2}}

\global\long\def\mfAT#1#2{{\argsAT{\joint\MF}{#1}{#2}}}

\global\long\def\mfAIT#1#2#3{{\argsAIT{\MF}{#1}{#2}{#3}}}

\global\long\def\mfIT#1#2{{\argsAT{\MF}{#1}{#2}}}

\global\long\def\mflAT#1#2{{\argsAT{\joint\MFl}{#1}{#2}}}

\global\long\def\mflAIT#1#2#3{{\argsAIT{\MFl}{#1}{#2}{#3}}}

\global\long\def\mfnAT#1#2{{\argsAT{\joint\MFn}{#1}{#2}}}

\global\long\def\mfnAIT#1#2#3{{\argsAIT{\MFn}{#1}{#2}{#3}}}

\global\long\def\nmfA#1{\argsA{\joint\NMF}{#1}}

\global\long\def\nmfAT#1#2{\argsAT{\joint\NMF}{#1}{#2}}

\global\long\def\nmfAIT#1#2#3{\argsAIT{\NMF}{#1}{#2}#3}

\global\long\def\pfA#1{\argsA{\joint\PF}{#1}}

\global\long\def\pfAT#1#2{\argsAT{\joint\PF}{#1}{#2}}

\global\long\def\dsetAT#1#2{\argsAT{\DSET}{#1}{#2}}

\global\long\def\ifsAT#1#2{\argsAT{u}{#1}{#2}}

\global\long\def\ifpiAT#1#2{\argsAT{I}{\rightarrow#1}{#2}}

\global\long\def\co{c}

\global\long\def\argsC#1#2{\argsA{#1}{#2}}

\newcommand{\argsCT}[3]	 {\argsAT{#1}{#2}{#3}}
\newcommand{\argsCI}[3]	 {\argsAI{#1}{#2}{#3}}
\newcommand{\argsCIT}[4]	{\argsAIT{#1}{#2}{#3}{#4}}

\global\long\def\RC#1{\argsC R{#1}}

\global\long\def\RCT#1#2{\argsCT R{#1}{#2}}

\global\long\def\jaC#1{\argsC{\AC}{#1}}

\global\long\def\jaCT#1#2{\argsCT{\AC}{#1}{#2}}

\global\long\def\mfC#1{\argsC{\joint{\MF}}{#1}}

\global\long\def\mfCT#1#2{{\argsCT{\joint{\MF}}{#1}{#2}}}

\global\long\def\mfCI#1#2{{\argsCI{\MF}{#1}{#2}{}}}

\global\long\def\mfCIT#1#2#3{{\argsCIT{\MF}{#1}{#2}{#3}}}

\global\long\def\mfIT#1#2{{\argsCT{\MF}{#1}{#2}}}

\global\long\def\mflCT#1#2{{\argsCT{\joint\MFl}{#1}{#2}}}

\global\long\def\mflCI#1#2{{\argsCI{\MFl}{#1}{#2}}}

\global\long\def\mflCIT#1#2#3{{\argsCIT{\MFl}{#1}{#2}{#3}}}

\global\long\def\mfnCT#1#2{{\argsCT{\joint\MFn}{#1}{#2}}}

\global\long\def\mfnCI#1#2{{\argsCI{\MFn}{#1}{#2}}}

\global\long\def\mfnCIT#1#2#3{{\argsCIT{\MFn}{#1}{#2}{#3}}}

\global\long\def\nmfC#1{\argsC{\joint\NMF}{#1}}

\global\long\def\nmfCT#1#2{\argsCT{\joint\NMF}{#1}{#2}}

\global\long\def\nmfCIT#1#2#3{\argsCIT{\NMF}{#1}{#2}#3}

\global\long\def\pfC#1{\argsC{\joint\PF}{#1}}

\global\long\def\pfCT#1#2{\argsCT{\joint\PF}{#1}{#2}}

\global\long\def\dsetCT#1#2{\argsCT{\DSET}{#1}{#2}}

\global\long\def\ifsCT#1#2{\argsCT{u}{#1}{#2}}

\global\long\def\ifpiC#1{\argsC I{\rightarrow#1}}

\global\long\def\ifpiCT#1#2{\argsCT{I}{\rightarrow#1}{#2}}

\global\long\def\IASPstateT#1{\argsCT{\Aug{\sfacv}}{\co}{#1}}

\global\long\def\IASPbelT#1{\argsCT{\Aug{\jb}}{\co}{#1}}

\global\long\def\IASPbelIT#1#2{\argsCT{\Aug{\jb}}{#1}{#2}}

\global\long\def\PT#1{{\check{#1}}}

\global\long\def\PTIASPstateT#1{\argsT{\PT{\s}}{#1}}

\global\long\def\PTaT#1{\argsT{\PT{\aA{\agNONE}}}{#1}}

\renewcommand{\nrA}{|\agentS|} 

\newcommand{\nrASP}{|\agentS'|}

\newcommand{\PRIME}{\prime}

\newcommand{\NOPR}{}


\newcommand{\DPRIME}{}
\newcommand{\DPRIPRI}{\prime}
\newcommand{\SPRIME}{}
\begin{abstract}
\small{}Recent years have seen the development of methods for multiagent
planning under uncertainty that scale to tens or even hundreds of
agents. However, most of these methods either make restrictive assumptions
on the problem domain, or provide approximate solutions \emph{without}
any guarantees on quality. Methods in the former category typically
build on heuristic search using upper bounds on the value function.
Unfortunately, no techniques exist to compute such upper bounds for
problems with non-factored value functions. To allow for meaningful
benchmarking through measurable quality guarantees on a very general
class of problems, this paper introduces a family of \emph{influence-optimistic
}upper bounds for factored decentralized partially observable Markov
decision processes (Dec-POMDPs) that do not have factored value functions.
Intuitively, we derive bounds on very large multiagent planning problems
by subdividing them in sub-problems, and  at each of these sub-problems making
optimistic assumptions with respect to the influence that will be
exerted by the rest of the system. We numerically compare the different
upper bounds and demonstrate how we can achieve a non-trivial guarantee
that a heuristic solution for problems with hundreds of agents is
close to optimal. Furthermore, we provide evidence that the upper
bounds may improve the effectiveness of heuristic influence search,
and discuss further potential applications to multiagent planning.
\end{abstract}

\section{Introduction}

Planning for multiagent systems (MASs) under uncertainty is an important
research problem in artificial intelligence. The decentralized partially
observable Markov decision process (Dec-POMDP) is a general principled
framework for addressing such problems. Many recent approaches to
solving Dec-POMDPs propose to exploit \emph{locality of interaction}\ \cite{Nair05AAAI}
also referred to as \emph{value factorization}~\cite{Kumar11IJCAI}.
However, without making very strong assumptions, such as transition
and observation independence~\cite{Becker03AAMAS}, there is no strict
locality: in general the actions of any agent may affect the rewards
received in a different part of the system, even if that agent and
the origin of that reward are (spatially) far apart. For instance,
in a traffic network the actions taken in one part of the network
will eventually influence the rest of the network~\cite{Oliehoek08AAMAS}.

A number of approaches have been proposed to generate solutions for
large MASs \cite{Velagapudi11AAMAS,Yin11IJCAI,Oliehoek13AAMAS,Wu13IJCAI,Dibangoye14AAMAS,Varakantham14AAAI}.
However, these heuristic methods come without guarantees. In fact,
since it has been shown that approximation (given some~$\epsilon$,
finding a solution with value within $\epsilon$ of optimal) of Dec-POMDPs
is NEXP-complete~\cite{Rabinovich03AAMAS}, it is unrealistic to
expect to find general, scalable methods that have such guarantees.
However, the lack of guarantees also makes it difficult to meaningfully
interpret the results produced by heuristic methods. In this work,
we mitigate this issue by proposing a novel set of techniques that
can be used to provide upper bounds on the performance of large \emph{factored}
Dec-POMDPs.

More generally, the ability to compute upper bounds is important for
numerous reasons: 1) As stated above, they are crucial for a meaningful
interpretation of the quality of heuristic methods. 2) Such knowledge
of performance gaps is crucial for researchers to direct their focus
to promising areas. 3) Such knowledge is also crucial for understanding
which problems seem simpler to approximate than others, which in turn
may lead to improved theoretical understanding of different problems.
4) Knowledge about the performance gap of the leading heuristic methods
can also accelerate their real-world deployment, e.g., when their
performance gap is proven to be small over sampled domain instances,
or when the selection of which heuristic method to deploy is facilitated
by clarifying the trade-off of computation and closeness to optimality.
5) Upper bounds on achievable value without communication may guide
decisions on investments in communication infrastructure.\textbf{
}6) Last, but not least, these upper bounds can directly be used in
current and future heuristic search methods, as we will discuss in
some more detail at the end of this paper.

Computing upper bounds on the achievable value of a planning problem
typically involves relaxing the original problem by making some optimistic
assumptions. For instance, in the case of Dec-POMDPs typical assumptions
are that the agents can communicate or observe the true state of the
system~\cite{Emery-Montemerlo04AAMAS,Szer05UAI_MAA,Roth05AAMAS,Oliehoek08JAIR}.
By exploiting the fact that transition and observation dependence
leads to a value function that is \emph{additively factored} into
a number of small components (we say that the value function is `factored',
or that the setting exhibits `value factorization'), such techniques
have been extended to compute upper bounds for so-called \emph{network-distributed
POMDPs (ND-POMDPs)} with many agents. This has greatly increased the
size of the problems that can be solved~\cite{Varakantham07AAMAS,Marecki08AAMAS,Dibangoye14AAMAS}.
Unfortunately, assuming both transition and observation independence
(or, more generally, value factorization) narrows down the applicability
of the model, and no techniques for computing upper bounds for more
general factored Dec-POMDPs with many agents are currently known.

We address this problem by proposing a general technique for computing
what we call \emph{influence-optimistic }upper bounds\emph{.} These
are upper bounds on the achievable value in large-scale MASs formed
by computing \emph{local }influence-optimistic\emph{ }upper bounds
on the value of sub-problems that consist of small subsets of agents
and state factors. The key idea is that if we make optimistic assumptions
about how the rest of the system will influence a sub-problem, we
can decouple it and effectively compute a local upper bound on the
achievable value. Finally, we show how these local bounds can be combined
into a \emph{global} upper bound. In this way, the major contribution
of this paper is that it shows how we can compute \emph{factored upper
bounds }for models that \emph{do} \emph{not admit factored value functions}.

We empirically evaluate the utility of influence-opti\-mistic upper
bounds by investigating the quality guarantees they provide for heuristic
methods, and by examining their application in a heuristic search
method. The results show that the proposed bounds are tight enough
to give meaningful quality guarantees for the heuristic solutions
for factored Dec-POMDPs with hundreds of agents.%
\footnote{In the paper, we use the word `tight' for its (empirical) meaning
of ``close to optimal'', not for its (theoretical CS) meaning of
``coinciding with the best possible bound''.%
} This is a major accomplishment since previous approaches that provide
guarantees 1) have required very particular structure such as transition
and observation independence \cite{Becker03AAMAS,Becker04AAMAS,Varakantham07AAMAS,Dibangoye14AAMAS}
or `transition-decoupledness' combined with very specific interaction
structures (transitions of an agent can be affected in a \emph{directed }fashion
and only by a small subset of other agents) \cite{Witwicki11PhD}, and 2) have not scaled beyond 50 agents. In
contrast, this paper demonstrates quality bounds in settings of hundreds
of agents that all influence each other via their actions.\textbf{}

This paper is organized as follows. First, \sect\ref{sec:background}
describes the required background by introducing the factored Dec-POMDP
model. Next, \sect\ref{sec:sub-problems} describes the sub-problems
that form the basis of our decomposition scheme. \sect\ref{sec:local-upper-bounds}
proposes local influence-optimistic upper bounds for such sub-problems
together with the techniques to compute them. Subsequently, \sect\ref{sec:global-upper-bounds}
discusses how these local upper bounds can be combined into a global
upper bound for large problems with many agents. \sect\ref{sec:evaluation}
empirically investigates the merits of the proposed bounds. \sect\ref{sec:related-work}
places our work in the context of related work in more detail, and
\sect\ref{sec:Conclusions} concludes.

\section{Background}

\label{sec:background}In this paper we focus on factored Dec-POMDPs~\cite{Oliehoek08AAMAS},
which are Dec-POMDPs where the transition and observation models can
be represented compactly as a \emph{two-stage dynamic Bayesian network}~(2DBN)~\cite{Boutilier99JAIR}:\begin{definition}
A \emph{factored Dec-POMDP} is a tuple $\mathcal{M}=\langle\agentS,\jaS,\joS,\sfacS,\Tfunc,\Ofunc,\RS,\bO\rangle,$
where: 
\begin{itemize}
\item $\agentS=\{\agentI1,\dots,\agentI\nrA\}$ is the set of agents.
\item $\jaS=\bigotimes_{i\in\agentS}\aAS i$ is the set of joint actions
$\ja$.
\item $\joS=\bigotimes_{i\in\agentS}\oAS i$ is the set of joint observations
$\jo$.
\item $\sfacS=\left\{ \sfacI1,\dots,\sfacI\nrSF\right\} $ is a set of state
variables, or \emph{factors}, that take values $\sfacvIS k$ and thus
span the set of states $\sS=\bigotimes_{\sfacI k\in\sfacS}\sfacvIS k.$
\item $\Tfunc(\s'|\s,\ja)$ is the transition model which is specified by
a set of conditional probability tables (CPTs), one for each factor.
\item $\Ofunc(\jo|\ja,s')$ is the observation model, specified by a CPT
per agent.
\item $\RS=\left\{ \RI1,\dots,\RI\nrR\right\} $ is a set of \emph{local}
reward functions.
\item $\bO$ is the (factored) initial state distribution.
\end{itemize}
\end{definition}

Each local reward function $\RI l$ has a \emph{state factor scope}
$\sfacS(l)\subseteq\sfacS$ and \emph{agent scope} $\agentS(l)\subseteq\agentS$
over which is it is defined: $\RI l(\sfacGT{\sfacS(l)}{},\jaG{\agentS(l)},\sfacGT{\sfacS(l)}{\prime})\in\reals$.
These local reward functions form the global immediate reward function
via addition. We slightly abuse notation and overload $l$ to denote
both an index into the set of reward functions, as well as the corresponding
scopes: 
\[
\R(s,\ja,\s')\defas\sum_{l\in\RS}\RI l(\sfacvG l,\jaG l,\sfacvG l').
\]

Every Dec-POMDP can be converted to a factored Dec-POMDP, but the
additional structure that a factored model specifies is most useful
when the problem is \emph{weakly coupled}, meaning that there is sufficient
conditional independence in the 2DBN and that the scopes of the reward
functions are small. 

\begin{figure}
\begin{centering}
\includegraphics[width=0.8\columnwidth]{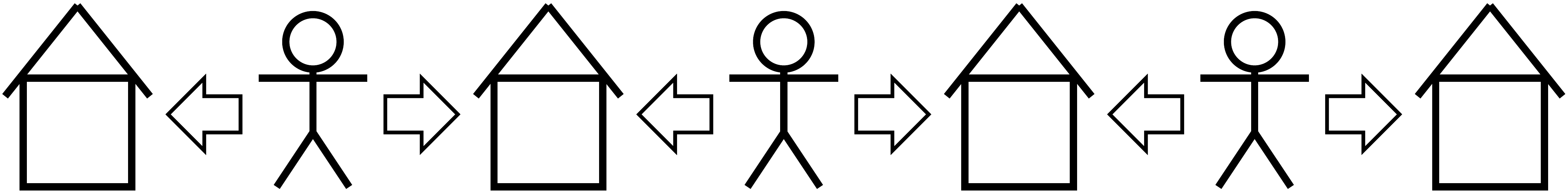}
\par\end{centering}

\protect\caption{The \<FFGfull> problem.}

\label{fig:FFG-problem}
\end{figure}

For instance, \fig \ref{fig:FFG-problem} shows the \<FFGfull> (\<FFG>)
problem \cite{Oliehoek13AAMAS}, which we adopt as a running example.
This problem defines a set of $\nrA+1$ houses, each with a particular
`fire level' indicating if the house is burning and with what intensity.
Each agent can fight fire at the house to its left or right, making
observations of flames (or no flames) at the visited house. Each house
has a local reward function associated with it, which depends on the
next-stage fire-level,%
\footnote{FFG has rewards of form $\RI l(\sfacvG l')$, but we support $\RI l(\sfacvG l,\jaG l,\sfacvG l')$
in general. %
} as illustrated in \fig\ref{fig:sub-problem}(left) which shows the
2DBN for a 4-agent instantiation of \<FFG>. The figure shows that
the connections are \emph{local} but there is no \emph{transition
independence}~\cite{Becker03AAMAS} or \emph{value factorization}
\cite{Kumar11IJCAI,Witwicki11PhD}: all houses and agents are connected
such that, over time, actions of each agent can influence the entire
system. While \<FFG> is a stylized example, such locally-connected
systems can be found in applications as traffic control \cite{Wu13IJCAI}
or communication networks~\cite{Ooi96,Hansen04AAAI,Mahajan14AOR}.

 This paper focuses on problems with a finite horizon~$\hor$ such
that $\ts=0,\dots,\hor-1$. A policy~$\polA i$ for an agent~$i$
specifies an action for each observation history $\oHistAT i{\ts}=(\oAT i1,\dots,\oAT i{\ts}).$
The task of planning for a factored Dec-POMDP entails finding a joint
policy $\jpol=\langle\polA 1,\dots,\polA{\nrA}\rangle$ with maximum
\emph{value}, i.e., expected sum of rewards:
\[
\V(\jpol)\defas\E\bigg[\sum_{\ts=0}^{\hor-1}\R(\s,\ja,\s')\mid\bO,\jpol\bigg].
\]
Such an optimal joint policy is denoted $\jpol^{*}$.%
\footnote{We omit the `{*}' on values; all values are assumed to be optimal
with respect to their given arguments.%
}

In recent years, a number of methods have been proposed to find approximate
solutions for factored Dec-POMDPs with many agents~\cite{Pajarinen11IJCAI,Kumar11IJCAI,Velagapudi11AAMAS,Oliehoek13AAMAS,Wu13IJCAI}
but none of these methods are able to give guarantees with respect
to the solution quality (i.e., they are \emph{heuristic }methods),
leaving the user unable to confidently gauge how well these methods
perform on their problems. This is a principled problem; even finding
an $\epsilon$-approximate solution is NEXP-complete~\cite{Rabinovich03AAMAS},
which implies that general and efficient approximation schemes are
unlikely to be found. In this paper, we propose a way forward by trying
to find instance-specific upper bounds in order to provide information
about the solution quality offered by heuristic methods.

\section{Sub-Problems and Influences}

\label{sec:sub-problems}
\begin{figure}
\begin{centering}
\begin{minipage}[t][60mm][c]{0.5\columnwidth}%
\begin{center}
\includegraphics[scale=0.62]{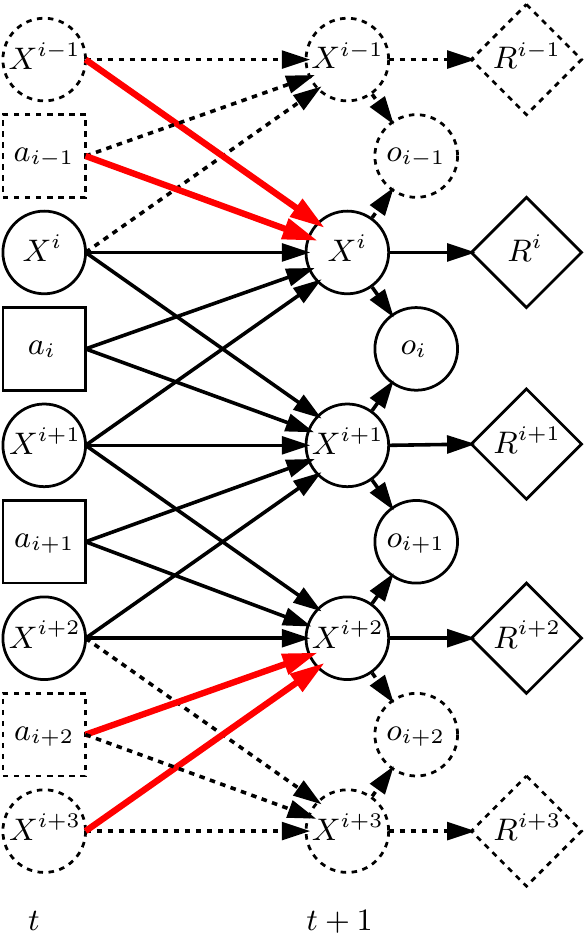}
\par\end{center}%
\end{minipage}%
\begin{minipage}[t][54mm][c]{0.5\columnwidth}%
\begin{center}
\includegraphics[scale=0.62]{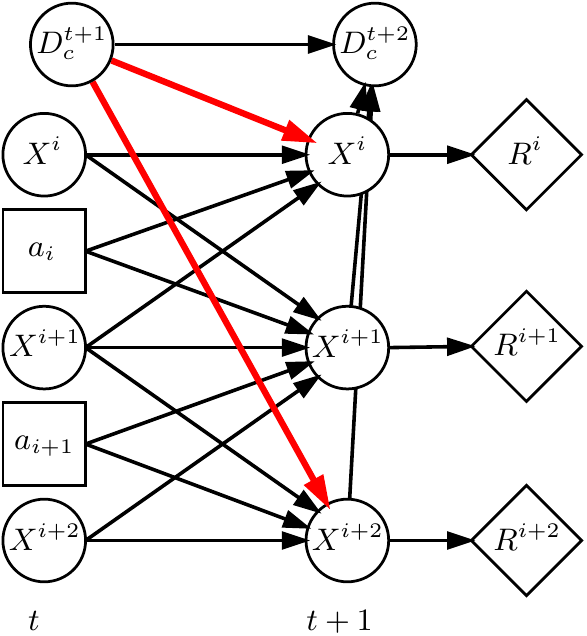}
\par\end{center}%
\end{minipage}
\par\end{centering}

\begin{centering}

\par\end{centering}

\protect\caption{Left: A 2-agent sub-problem within 4-agent \<FFG>. Right: the corresponding
IASP.}

\label{fig:sub-problem}\label{fig:IASP}
\end{figure}

The overall approach that we take is to divide the problem into sub-problems
(defined here), compute overestimations of the achievable value for
each of these sub-problems (discussed in \sect\ref{sec:local-upper-bounds})
and combine those into a global upper bound (\sect\ref{sec:global-upper-bounds}).

\subsection{Sub-Problems (SPs)}

The notion of a sub-problem generalizes the concept of a \emph{local-form
model (LFM) }\cite{Oliehoek12AAAI_IBA} to multiple agents and reward
components. We give a relatively concise description of this formalization,
for more details, please see~\cite{Oliehoek12AAAI_IBA}.

\begin{definition} A \emph{sub-problem (SP)} $\mathcal{M}_{\co}$\emph{
}of a factored Dec-POMDP $\mathcal{M}$ is a tuple $\mathcal{M}_{\co}=\langle\mathcal{M},\agentS',\sfacS',\RS'\rangle$,
where $\agentS'\subset\agentS,\sfacS'\subset\sfacS,\RS'\subset\RS$
denote subsets of agents, state factors and local reward functions.
\end{definition}

An SP inherits many features from $\mathcal{M}$: we can define local\emph{
}states $\mfC{\co}\in\bigotimes_{\sfac\in\sfacS'}$ and the subsets
$\agentS',\sfacS',\RS'$ induce local joint actions $\jaGS{\co}=\bigotimes_{i\in\agentS'}\aAS i$,
observations $\joGS{\co}=\bigotimes_{i\in\agentS'}\oAS i$, and rewards
\begin{equation}
\RC{\co}(\mfC{\co},\jaC{\co},\mfC{\co}')\defas\sum_{l\in\RS'}\RI l(\sfacvG l,\jaG l,\sfacvG l').\label{eq:R_c__local-reward}
\end{equation}
However, this is generally not enough to end up with a fully specified,
but smaller, factored Dec-POMDP. This is illustrated in \fig\ref{fig:sub-problem}(left),
which shows the 2DBN for a sub-problem of \<FFG> involving two agents
and three houses (dependence of observations $\oA i$ on actions $\aA i$
is not displayed). The figure shows that state factors $\sfac\in\sfacS'$ (in this
case $\sfacI{i}$ and $\sfacI{i+2}$) can be the target of arrows
pointing into the sub-problem from the non-modeled (dashed) part.
We refer to such state factors as \emph{non-locally affected factors
(NLAFs) }and denote them $\mfnCI{\co}k$, where $c$ indexes the SP
and $k$ indexes the factor. The other state factors in $\sfacS'$
are referred to as \emph{only-locally affected factors (OLAFs) }$\mflCI{\co}k$.
The figure clearly shows that the transition probabilities are not
well-defined since the NLAFs depend on the sources of the highlighted
\emph{influence links. }We refer to these sources as \emph{influence
sources} \emph{$\ifsCT{\co}{\ts+1}=\left\langle \nmfCT u{\ts},\jaGT u{\ts}\right\rangle $
}(in this case $\nmfCT u{\ts}=\left\langle \sfacI{i-1},\sfacI{i+3}\right\rangle $
and $\jaGT u{\ts}=\left\langle \aAT{i-1}{\ts},\aAT{i+2}{\ts}\right\rangle $).
This means that an SP $\co$ has an \emph{underspecified transition
model: $\Tfunc_{\co}(\mfCT{\co}{\ts+1}|\mfCT{\co}{\ts},\jaGT{\co}{\ts},\ifsCT{\co}{\ts+1})$.}

\subsection{Structural Assumptions}

In the most general form, the observation and reward model could also
be underspecified. In order to simplify the exposition, we make two
assumptions on the structure of an SP:
\begin{enumerate}
\item For all included agents $i\in\agentS'$, the state factors that can
influence its observations (i.e., ancestors of $\oA i$ in the 2DBN)
are included in $\mathcal{M}_{\co}$.
\item For all included reward components $\RI l\in\RS'$, the state factors
and actions that influence $\RI l$ are included in $\mathcal{M}_{\co}$.
\end{enumerate}
That is, we assume that SPs exhibit generalized forms of \emph{observation
independence},
\[
\Ofunc_{\co}(\joG\co|\jaC{\co},\mfC{\co}')\defas\Pr(\joG\co|\jaC{\co},\mfC{\co}')=\Pr(\joG\co|\ja,\s'),
\]
and \emph{reward independence }(cf.\ \eqref{eq:R_c__local-reward}).
These are more general notions of observation and reward independence
than used in previous work on TOI-Dec-MDPs~\cite{Becker03AAMAS}
and ND-POMDPs~\cite{Nair05AAAI}, since we allow overlap on state
factors that can be influenced by the agents themselves.%
\footnote{Previous work only allowed `external' or `unaffectable' state
factors to affect the observations or rewards of multiple components.%
}

Crucially, however, we do not assume any form of transition independence
(for instance, the sets $\sfacS'$ of SPs can overlap), nor do we
assume any of the transition-decoupling (i.e., TD-POMDP~\cite{Witwicki10ICAPS})
restrictions. That is, we \emph{neither }restrict which node types
can affect `private' nodes; nor do we disallow concurrent interaction
effects on `mutually modeled' nodes. 

This means that assumptions 1 and 2 (above) that we do make are without
loss of generality: it is possible to make any Dec-POMDP problem satisfy
them by introducing additional (dummy) state factors.%
\footnote{In contrast, TOI-Dec-MDPs and ND-POMDPs impose \emph{both }transition
and observation independence, thereby restricting consideration to
a proper subclass of those considered here. %
}

\subsection{Influence-Augmented SPs}

An LFM can be transformed into a so-called \emph{influence-augmented
local model}, which captures the influence of the policies and parts
of the environment that are not modeled in the local model~\cite{Oliehoek12AAAI_IBA}.
Here we extend this approach to SPs, thus leading to \emph{influence-augmented
sub-problems (IASPs). }

Intuitively, the construction of an IASP consists of two steps: 1)
capturing the influence of the non-modeled parts of the problem (given
$\jpolG{\excl{\co}}$ the policies of non-modeled agents) in an \emph{incoming
influence point}~$\ifpiC{\co}(\jpolG{-c})$, and 2) using this $\ifpiC{\co}$
to create a model with a transformed transition model $\Tfunc_{\ifpiC{\co}}$
and no further dependence on the external problem.

Step (1) can be done as follows: an \emph{incoming influence point
}can be specified as an \emph{incoming influence} $\ifpiCT{\co}{\ts}$
for each stage: $\ifpiC{\co}=\left(\ifpiCT{\co}1,\dots,\ifpiCT{\co}h\right)$.
Each such $\ifpiCT{\co}{\ts+1}$ corresponds to the influence that
the SP experiences at stage $\ts+1$, and thus specifies the conditional
probability distribution of the influence sources\emph{ $\ifsCT{\co}{\ts+1}=\left\langle \nmfCT u{\ts},\jaGT u{\ts}\right\rangle $.
}That is, assuming that the influencing agents use deterministic policies~$\jpolG u=\langle\polA i\rangle_{i\in u}$
that map observation histories to actions, $\ifpiCT{\co}{\ts+1}$
is the conditional probability distribution given by 
\begin{multline*}
\iffunc(\ifsCT{\co}{\ts+1}\,|\,\dsetCT{\co}{\ts+1})=\\
\sum_{\oHistGT u\ts}\IndFo{\jaGT u{\ts}=\jpolG u(\oHistGT u\ts)}\Pr(\nmfCT u{\ts},\oHistGT u\ts|\dsetCT{\co}{\ts+1},\bO,\jpolG{\excl{\co}}),
\end{multline*}
where $\IndFo{\cdot}$ is the Kronecker Delta function, and $\dsetCT{\co}{\ts+1}$
the \emph{d-separating set }for $\ifpiCT{\co}{\ts+1}$: the history
of a subset of all the modeled variables that d-separates the modeled
variables from the non-modeled ones.%
\footnote{$\dsetCT{\co}{\ts+1}$ is defined such that $\Pr(\nmfCT u{\ts},\oHistGT u\ts|\dsetCT{\co}{\ts+1},\bO,\jpolG{\excl{\co}},\aoHistGT{\co}{\ts})$
$=$$\Pr(\nmfCT u{\ts},\oHistGT u\ts|\dsetCT{\co}{\ts+1},\bO,\jpolG{\excl{\co}})$,
see \cite{Oliehoek12AAAI_IBA} for details.%
}

Step (2) involves replacing the CPTs for all the NLAFs by the CPTs
\emph{induced }by $\ifpiC{\co}$. \begin{definition}Let $\mfnCIT{\co}k{\ts+1}$
be an NLAF (with index~$k$), and $\ifsCT{\co}{\ts+1}$ (the instantiation
of) the corresponding influence sources. Given the influence $\ifpiCT{\co}{\ts+1}(\jpolG{\excl{\co}})$,
and its d-separating set $\dsetAT i{\ts+1}$, we define the \emph{induced
CPT }for\emph{ }$\mfnCIT{\co}k{\ts+1}$ as the CPT that specifies
probabilities: 
\begin{multline}
p_{\ifpiCT{\co}{\ts+1}}(\mfnCIT{\co}k{\ts+1}|\mfCT{\co}{\ts},\dsetCT{\co}{\ts+1},\jaGT{\co}{\ts})=\\
\sum_{\ifsCT{\co}{\ts+1}=\left\langle \nmfAT u{\ts},\jaGT u{\ts}\right\rangle }\Pr(\mfnCIT{\co}k{\ts+1}|\mfCT{\co}{\ts},\jaGT{\co}{\ts},\ifsCT{\co}{\ts+1})\iffunc(\ifsCT{\co}{\ts+1}|\dsetCT{\co}{\ts+1}).\label{eq:induced-CPT}
\end{multline}
\end{definition}

Finally, we can define the IASP.

\begin{definition} An \emph{influence-augmented SP (IASP)} $\mathcal{M}_{\co}^{IA}$
$=\langle\mathcal{M}_{\co},\ifpiC{\co}\rangle$ for an SP $\mathcal{M}_{\co}=\langle\mathcal{M},\agentS',\sfacS',\RS'\rangle$
is a factored Dec-POMDP with the following components: 
\begin{itemize}
\item The agents (implying the actions and observations) from the respective
subproblem participate: $\bar{\agentS}=\agentS'$.
\item The set of state factors is $\bar{\sfacS}=\sfacS'\cup\{\dsetCT{\co}{}\}$
such that states $\IASPstateT{\ts}=\langle\mfCT{\co}{\ts},\dsetCT{\co}{\ts+1}\rangle$
specify a local state of the SP, as well as the d-separating set $\dsetAT i{\ts+1}$
for the next-stage influences. 
\item Transitions are specified as follows: For all OLAFs~$\mfCI{\co}k$
we take the CPTs from the factored Dec-POMDP~$\mathcal{M}$, but
for all NLAFs we take the induced CPTs, leading to an influence-augmented
transition model which is the product of CPTs of OLAFs and NLAFS:
\begin{multline}
\bar{\Tfunc}_{\ifpiC{\co}}(\mfCT{\co}{\ts+1}|\langle\mfCT{\co}{\ts},\dsetCT{\co}{\ts+1}\rangle,\jaGT{\co}{\ts})=\Pr(\mflCT{\co}{\ts+1}|\mfCT{\co}{\ts},\jaGT{\co}{\ts})\\
\sum_{\ifsCT{\co}{\ts+1}=\left\langle \nmfAT u{\ts},\jaGT u{\ts}\right\rangle }\Pr(\mfnCT{\co}{\ts+1}|\mfCT{\co}{\ts},\jaGT{\co}{\ts},\ifsCT{\co}{\ts+1})\iffunc(\ifsCT{\co}{\ts+1}|\dsetCT{\co}{\ts+1}).\label{eq:induced_T}
\end{multline}
(Note that $\mfCT{\co}{\ts},\jaGT{\co}{\ts},\mfCT{\co}{\ts+1}$ and
$\dsetCT{\co}{\ts+1}$ together uniquely specify $\dsetCT{\co}{\ts+2}$). 
\item The observation model $\bar{\Ofunc}$ follows directly from $\Ofunc$
(from \emph{$\mathcal{M}$}). 
\item The reward is identical to that of the SP: $\bar{\RS}=\RS'$. 
\end{itemize}
\end{definition}\fig\ref{fig:IASP}(right) illustrates the IASP
for FFG. It shows how the d-separating set acts as a parent for all
NLAFs, thus replacing the dependence on the external part of the problem.

We write $\argsC{\V}{\co}(\jpol)$ for the value that would be realized
for the reward components modeled in sub-problem $\co$, under a given
joint policy $\jpol$: 
\[
\argsC{\V}{\co}(\jpol)\defas\E\left[\sum_{\ts=0}^{h-1}\RCT{\co}{\ts}(\s,\ja,\s')\mid\bO,\jpol\right].
\]
 As one can derive, given the policies of other agents $\jpolG{\excl{\co}}$,
$V_{\co}(\ifpiC{\co}(\jpolG{\excl{\co}}))$, the value of the optimal
solution of an IASP constructed for the influence corresponding to
$\jpolG{\excl{\co}}$, is equal to the best-response value: 
\begin{equation}
V_{\co}^{BR}(\jpolG{\excl{\co}})\defas\max_{\jpolG{\co}}\argsC{\V}{\co}(\jpolG{\co},\jpolG{\excl{\co}})=V_{\co}(\ifpiC{\co}(\jpolG{\excl{\co}})).
\end{equation}
This extends the result in \cite{Oliehoek12AAAI_IBA} to multiagent
SPs.

\section{Local Upper Bounds }

\label{sec:local-upper-bounds}In this section we present our main
technical contribution: the machinery to compute \emph{influence-optimistic
upper bounds (IO-UBs) }for the value of sub-problems. In order to
properly define this class of upper bound, we first define the \emph{locally-optimal
value: }\begin{definition}The \emph{locally-optimal value }for an
SP~$\co$, 
\begin{equation}
V_{\co}^{LO}\defas\max_{\jpolG{\excl{\co}}}V_{\co}^{BR}(\jpolG{\excl{\co}})=\max_{\jpolG{\excl{\co}}}V_{\co}(\ifpiC{\co}(\jpolG{\excl{\co}})),\label{eq:V^LO}
\end{equation}
is\emph{ }the local value (considering only the rewards $\RC{\co}$)
that can be achieved when all agents use a policy selected to optimize
this local value. We will denote the maximizing argument by $\jpolG{\excl{\co}}^{LO}$.
\end{definition} 

Note that $V_{\co}^{LO}\geq V_{\co}(\jpol^{*})$---the value for the
rewards~$\RC{\co}$ under the optimal joint policy $\jpol^{*}$---since
$\jpol^{*}$ optimizes the sum of all local reward functions: it might
be optimal to sacrifice some reward $\RC{\co}$ if it is made up by
higher rewards outside of the sub-problem.

$V_{\co}^{LO}$ expresses the maximal value achievable under a \emph{feasible}
incoming influence point; i.e., it is optimistic about the influence,
but maintains that the influence is feasible. Computing this value
can be difficult, since computing influences and subsequently constructing
and optimally solving an IASP can be very expensive in general. However,
it turns out computing upper bounds to $V_{\co}^{LO}$ can be done
more efficiently, as discussed in \sect\ref{sec:complexity-analysis}.
\textbf{}

The IO-UBs that we propose in the remainder of this section upper-bound
$V_{\co}^{LO}$ by relaxing the requirement of the incoming influence
being feasible, thus allowing for more efficient computation. We present
three approaches that each overestimate the value by being optimistic
with respect to the assumed influence, but that differ in the additional
assumptions that they make.

\subsection{A Q-MMDP Approach}

\label{sec:MMDP-based}

The first approach we consider is called \emph{influence-optimistic
Q-MMDP (IO-Q-MMDP)}. Like all the heuristics we introduce, it assumes
that the considered SP will receive the most optimistic (possibly
infeasible) influence. In addition, it assumes that the SP is fully
observable such that it reduces to a local multiagent MDP (MMDP)~\cite{Boutilier96AAAI}.
In other words, this approach resembles Q-MMDP~\cite{Szer05UAI_MAA,Oliehoek08JAIR},
but is applied to an SP, and performs an influence-optimistic estimation
of value.%
\footnote{What we have termed ``Q-MMDP'' has been referred to in past work
as ``Q-MDP''; we add the extra M emphasize the presence of multiple
agents.%
} IO-Q-MMDP makes, in addition to influence optimism, another overestimation
due to its assumption of full observability. While this negatively
affects the tightness of the upper bound, it has as the advantage
that its computational complexity is relatively low.

Formally, we can describe IO-Q-MMDP as follows. In the first phase,
we apply dynamic programming to compute the action-values for all
local states:{\small
\begin{multline}
\negmedspace\negmedspace\negmedspace\negmedspace\Q(\mfCT{\co}{\ts},\jaGT{\co}{\ts})=\max_{\ifsCT{\co}{\ts+1}}\sum_{\mfCT{\co}{\ts+1}}\Pr(\mflCT{\co}{\ts+1}|\mfCT{\co}{\ts},\jaGT{\co}{\ts})\Pr(\mfnCT{\co}{\ts+1}|\mfCT{\co}{\ts},\jaGT{\co}{\ts},\ifsCT{\co}{\ts+1})\\
\Big[\RC{\co}(\mfCT{\co}{\ts},\jaGT{\co}{\ts},\mfCT{\co}{\ts+1})+\max_{\jaGT{\co}{\ts+1}}\Q(\mfCT{\co}{\ts+1},\jaGT{\co}{\ts+1})\Big].\label{eq:IO-Q-QMMDP}
\end{multline}
}Comparing this equation to \eqref{eq:induced_T}, it is clear that
this equation is optimistic with respect to the influence: it selects
the sources $\ifsCT{\co}{\ts+1}$ in order to select the most beneficial
transition probabilities. In the second phase, we use these values
to compute an upper bound:
\[
\argsC{\hat{\V}}{\co}^{M}\defas\max_{\jaG{\co}}\sum_{\mfCT{\co}{}}\bO(\mfCT{\co}{})Q(\mfCT{\co}{},\jaG{\co}).
\]

This procedure is guaranteed to yield an upper bound to the locally-optimal
value for the SP. 

\begin{theorem}\label{thm:IO-Q-MMDP-is-UB}IO-Q-MMDP yields an upper
bound to the locally-optimal value: $V_{\co}^{LO}\leq\argsC{\hat{\V}}{\co}^{M}$.\end{theorem}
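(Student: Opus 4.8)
The plan is to express $V_{\co}^{LO}$ as the best attainable local value over \emph{all} joint policies and then dominate it by two successive monotone relaxations of the local dynamics. First I would combine the definition \eqref{eq:V^LO} with the best-response identity $V_{\co}^{BR}(\jpolG{\excl{\co}})=\max_{\jpolG{\co}}\argsC{\V}{\co}(\jpolG{\co},\jpolG{\excl{\co}})$ to obtain
\[
V_{\co}^{LO}=\max_{\jpolG{\excl{\co}}}\max_{\jpolG{\co}}V_{\co}(\jpol)=\max_{\jpol}V_{\co}(\jpol),
\]
since a pair $(\jpolG{\co},\jpolG{\excl{\co}})$ ranges over all joint policies. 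Hence it suffices to fix an arbitrary $\jpol=(\jpolG{\co},\jpolG{\excl{\co}})$ and prove $V_{\co}(\jpol)\le\argsC{\hat{\V}}{\co}^{M}$; taking the maximum over $\jpol$ then yields the theorem.

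For the fixed $\jpol$ I would argue by backward induction over the stage $\ts$. Let $U_{\ts}$ denote the local reward-to-go $\E\big[\sum_{\ts'=\ts}^{\hor-1}\RCT{\co}{\ts'}(\s,\ja,\s')\mid\cdot,\bO,\jpol\big]$ conditioned on a full joint action-observation history up to stage $\ts$, and let $V^{M}_{\ts}(\mfCT{\co}{\ts})=\max_{\jaGT{\co}{\ts}}Q(\mfCT{\co}{\ts},\jaGT{\co}{\ts})$ be the value-to-go implied by the recursion \eqref{eq:IO-Q-QMMDP}. The inductive claim is
\[
U_{\ts}\;\le\;\sum_{\mfCT{\co}{\ts}}\Pr(\mfCT{\co}{\ts}\mid\cdot,\bO,\jpol)\,Q(\mfCT{\co}{\ts},\jaGT{\co}{\ts}),
\]
where $\jaGT{\co}{\ts}$ is the action that $\jpolG{\co}$ deterministically prescribes for the current modeled observation history. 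The step expands $U_{\ts}$ by one stage. Reward independence makes $\RCT{\co}{\ts}$ a function of the local variables $(\mfCT{\co}{\ts},\jaGT{\co}{\ts},\mfCT{\co}{\ts+1})$ only, and the d-separation property (the footnote in \sect\ref{sec:sub-problems}) makes the true next-local-state distribution, conditioned on the history, equal to the influence-weighted product in \eqref{eq:induced_T}. Two inequalities then close the step: (i) the standard Q-MDP/full-observability collapse $\sum_{\joT{\ts+1}}\Pr(\joT{\ts+1}\mid\cdot)\Pr(\mfCT{\co}{\ts+1}\mid\cdot,\joT{\ts+1})=\Pr(\mfCT{\co}{\ts+1}\mid\cdot)$, which discards the observation expectation and lets me replace the history-dependent continuation by the per-state maximum $V^{M}_{\ts+1}=\max_{\jaGT{\co}{\ts+1}}Q$ (using the induction hypothesis together with $Q\le V^{M}_{\ts+1}$); and (ii) influence optimism, $\sum_{\ifsCT{\co}{\ts+1}}\iffunc(\ifsCT{\co}{\ts+1}\mid\dsetCT{\co}{\ts+1})(\cdots)\le\max_{\ifsCT{\co}{\ts+1}}(\cdots)$, because a maximum dominates any convex combination. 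Matching the resulting expression against \eqref{eq:IO-Q-QMMDP} gives exactly $Q(\mfCT{\co}{\ts},\jaGT{\co}{\ts})$, completing the step; the base case at $\ts=\hor$ is $U_{\hor}=0=V^{M}_{\hor}$.

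Finally, I would treat stage $0$ separately, which is what makes the bound match $\argsC{\hat{\V}}{\co}^{M}$ rather than the looser $\sum_{\mfCT{\co}{}}\bO(\mfCT{\co}{})V^{M}_{0}(\mfCT{\co}{})$. At $\ts=0$ every modeled agent has the empty observation history, so $\jpolG{\co}$ commits to a single joint action $\jaGT{\co}{0}$ independent of the state; the inductive claim at $\ts=0$ then gives
\[
V_{\co}(\jpol)=U_{0}\le\sum_{\mfCT{\co}{}}\bO(\mfCT{\co}{})Q(\mfCT{\co}{},\jaGT{\co}{0})\le\max_{\jaG{\co}}\sum_{\mfCT{\co}{}}\bO(\mfCT{\co}{})Q(\mfCT{\co}{},\jaG{\co})=\argsC{\hat{\V}}{\co}^{M},
\]
and maximizing over $\jpol$ finishes the proof. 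I expect the inductive step to be the main obstacle: the delicate points are the d-separation bookkeeping needed so that the influence-source distribution is well-defined from the modeled history alone, and the argument that the pointwise $\max$ over $\ifsCT{\co}{\ts+1}$ dominates the \emph{feasible} influence-weighted average uniformly---even though in the exact IASP that weighting depends on the augmented component $\dsetCT{\co}{\ts+1}$, which IO-Q-MMDP drops from its state. A secondary subtlety is that the per-state maximization used inside the induction must be suppressed at $\ts=0$: only the committed-first-action bound delivers the $\max_{\jaG{\co}}$ \emph{outside} the belief average required to reach $\argsC{\hat{\V}}{\co}^{M}$.
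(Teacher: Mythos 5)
Your proof is correct, and it reaches the result by a genuinely different route than the paper. The paper's proof is modular: it compares Q\-/functions, arguing (by a deferred induction) that the IO backup \eqref{eq:IO-Q-QMMDP} dominates the Q-MMDP backup \eqref{eq:Q-QMMDP} of the IASP induced by \emph{any} feasible influence---in particular by $\ifpiC{\co}(\jpolG{\excl{\co}}^{LO})$---then invokes the known result that the Q-MMDP value of a Dec-POMDP upper-bounds its optimal value \cite{Szer05UAI_MAA}, and closes with the best-response identity $V_{\co}(\ifpiC{\co}(\jpolG{\excl{\co}}^{LO}))=V_{\co}^{LO}$. You use that same identity at the outset, in the opposite direction, to write $V_{\co}^{LO}=\max_{\jpol}\argsC{\V}{\co}(\jpol)$ and thereby eliminate influences and IASPs entirely; you then bound the local value of an arbitrary fixed $\jpol$ by a single backward induction over full histories in which both relaxations are carried out inline: the Q-MDP observation-marginalization collapse (which the paper outsources to the cited result) and influence optimism (the pointwise $\max_{\ifsCT{\co}{\ts+1}}$ dominating the feasible conditional mixture over influence sources). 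Note that your worry about ``d-separation bookkeeping'' is moot for your own argument: you never need the mixture weights to be expressible from a d-separating set, since the law of total probability yields \emph{some} conditional distribution over $\ifsCT{\co}{\ts+1}$ given the full history and local state, and the max dominates it whatever it is. What your route buys: a self-contained, elementary proof that never constructs induced CPTs or augmented states, and that makes explicit the stage-0 subtlety (the first local joint action is history-independent) needed to land on the $\max_{\jaG{\co}}$ \emph{outside} the expectation in $\argsC{\hat{\V}}{\co}^{M}$---a point the paper's ``directly implies'' glosses over. What the paper's route buys: brevity and reusability---its IASP-level comparison of backups is exactly the pattern that recurs, with value vectors in place of Q-tables, in \lem\ref{lem:maxD_v__leq__vIO} and \thm\ref{thm:QMMDP_leq_IOQMMDP} for the MPOMDP and Dec-POMDP bounds.
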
\begin{proof}An
inductive argument easily establishes that, due to the maximization
it performs, \eqref{eq:IO-Q-QMMDP} is at least as great as the Q-MMDP
value (for all~$\dsetCT{\co}{\ts+1}$) of \emph{any} feasible influence,
given by:
\begin{multline}
\Q_{\co}^{\mathit{MMDP}}(\langle\mfCT{\co}{\ts},\dsetCT{\co}{\ts+1}\rangle,\jaGT{\co}{\ts})=\sum_{\mfCT{\co}{\ts+1}}\bar{T}_{\ifpiC{\co}}(\mfCT{\co}{\ts+1}|\langle\mfCT{\co}{\ts},\dsetCT{\co}{\ts+1}\rangle,\jaGT{\co}{\ts})\\
\Big[\RC{\co}(\mfCT{\co}{\ts},\jaGT{\co}{\ts},\mfCT{\co}{\ts+1})+\max_{\jaGT{\co}{\ts+1}}\Q(\langle\mfCT{\co}{\ts+1},\dsetCT{\co}{\ts+2}\rangle,\jaGT{\co}{\ts+1})\Big].\label{eq:Q-QMMDP}
\end{multline}
Therefore the value computed in \eqref{eq:IO-Q-QMMDP} is at least
as great as the Q-MMDP value~\eqref{eq:Q-QMMDP} induced by $\jpolG{\excl{\co}}^{LO}$
(the maximizing argument of \eqref{eq:V^LO}), for all $\mfCT{\co}{\ts},\dsetCT{\co}{\ts+1},\jaGT{\co}{\ts}$.
This directly implies
\[
\argsC{\hat{\V}}{\co}^{M}\geq V_{\co}^{\mathit{MMDP}}(\ifpiC{\co}(\jpolG{\excl{\co}}^{LO})),
\]
 Moreover, it is well known that, for any Dec-POMDP, the Q-MMDP value
is an upper bound to its value \cite{Szer05UAI_MAA}, such that 
\[
V_{\co}^{\mathit{MMDP}}(\ifpiC{\co}(\jpolG{\excl{\co}}^{LO}))\geq V_{\co}(\ifpiC{\co}(\jpolG{\excl{\co}}^{LO})).
\]
We can conclude that $\argsC{\hat{\V}}{\co}^{M}$ is an upper bound
to the Dec-POMDP value of the IASP induced by $\jpolG{\excl{\co}}^{LO}$:
\[
\argsC{\hat{\V}}{\co}^{M}\geq V_{\co}(\ifpiC{\co}(\jpolG{\excl{\co}}^{LO}))=\max_{\jpolG{\excl{\co}}}V_{\co}(\ifpiC{\co}(\jpolG{\excl{\co}}))=V_{\co}^{LO},
\]
with the identities given by \eqref{eq:V^LO}, thus proving the theorem.\end{proof}

The upshot of \eqref{eq:IO-Q-QMMDP} is that there are no dependencies
on d-separating sets and incoming influences anymore: the IO assumption
effectively eliminates these dependencies. As a result, there is no
need to actually construct the IASPs (that potentially have a very
large-state-space) if all we are interested in is an upper bound.

\subsection{A Q-MPOMDP Approach}

\label{sec:QMPOMDP-based}\textbf{}The IO-Q-MMDP approach of the
previous section introduces overestimations through influence-optimism
as well as by assuming full observability. Here we tighten the upper
bound by weakening the second assumption. In particular, we propose
an upper bound based on the underlying \emph{multiagent POMDP~(MPOMDP). }

An MPOMDP~\cite{Messias11NIPS24,Amato13MSDM} is partially observable,
but assumes that the agents can freely communicate their observations,
such that the problem reduces to a special type of centralized model
in which the decision maker (representing the entire team of agents)
takes joint actions, and receives joint observations. As a result,
the optimal value for an MPOMDP is analogous to that of a POMDP:
\begin{equation}
Q(\jbT\ts,\jaT{\ts})=R(\jbT\ts,\jaT{\ts})+\sum_{\joT{\ts+1}}\Pr(\joT{\ts+1}|\jbT\ts,\jaT{\ts})V(\jbT{\ts+1}),\label{eq:Q-POMDP}
\end{equation}
where $\jbT{\ts+1}$ is the joint belief resulting from performing
Bayesian updating of $\jbT{\ts}$ given $\jaT{\ts}$ and $\joT{\ts+1}$. 

Using the value function of the MPOMDP solution as a heuristic (i.e.,
an upper bound) for the value function of a Dec-POMDP is a technique
referred to as Q-MPOMDP~\cite{Roth05AAMAS,Oliehoek08JAIR}\emph{.}
Here we combine this approach with optimistic assumptions on the influences,
leading to \emph{influence-optimistic Q-MPOMDP (IO-Q-MPOMDP).}

In case that the influence on an SP is fully specified, \eqref{eq:Q-POMDP}
can be readily applied to the IASP. However, we want to deal with
the case where this influence is not specified. The basic, conceptually
simple, idea is to move from the \emph{influence-optimistic }MMDP-based
upper bounding scheme from in \sect\ref{sec:MMDP-based} to one based
on MPOMDPs. However, it presents a technical difficulty, since it
is not directly obvious how to extend \eqref{eq:IO-Q-QMMDP} to deal
with partial observability. In particular, in the MPOMDP case as given
by \eqref{eq:Q-POMDP}, the state $\mfCT{\co}{\ts}$ is replaced by
a belief over such local states and the influence sources $\ifsCT{\co}{\ts+1}$
affect the value by both manipulating the transition and observation
probabilities, as well as the resulting beliefs.

To overcome these difficulties, we propose a formulation that is not directly based on \eqref{eq:Q-POMDP}, but that too makes
use of `back-projected value vectors'. That is, it is possible to rewrite the optimal MPOMDP value function
as:%
\footnote{In this section and the next, we will restrict ourselves to rewards
of the form $\R(\s,\ja)$ to reduce the notational burden, but the
presented formulas can be extended to deal with $\R(\s,\ja,\s')$
formulations in a straightforward way.%
}
\begin{equation}
Q(\jbT\ts,\jaT{\ts})=\inprod{\jbT\ts}{r_{a}}+\gamma\sum_{\joT{\ts+1}}\max_{\nu^{ao}\in\mathcal{V}^{ao}}\inprod{\jbT\ts}{\nu^{ao}},\label{eq:Q-POMDP--back-proj-based}
\end{equation}
where $\inprod{}{}$ denotes inner product and where $\nu^{ao}\in\mathcal{V}^{ao}$
are the back-projections of value vectors $\nu\in\mathcal{V}^{\ts+1}$:
\begin{equation}
\nu^{ao}(\sT{\ts})\defas\sum_{\sT{\ts+1}}O(\joT{\ts+1}|\jaT{\ts},\sT{\ts+1})T(\sT{\ts+1}|\sT{\ts}\jaT{\ts})\nu(\sT{\ts+1}).\negthickspace\label{eq:BackProj_POMDP}
\end{equation}
(Please see, e.g., \cite{Spaan12RLBook,Shani13JAAMAS} for more details.)

The key insight that enables carrying influence-opti-mism to the MPOMDP
case is that this back-projected form \eqref{eq:BackProj_POMDP} does
allow us to take the maximum with respect to unspecified influences.
That is, we define the \emph{influence-optimistic back-projection
}as:
\begin{multline}
\nu_{IO}^{ao}(\mfCT{\co}{\ts})\defas\max_{\ifsCT{\co}{\ts+1}}\sum_{\mfCT{\co}{\ts+1}}O(\joGT{\co}{\ts+1}|\jaG{\co},\mfCT{\co}{\ts+1})\\
\Pr(\mfnCT{\co}{\ts+1}|\mfCT{\co}{\ts},\jaG{\co},\ifsCT{\co}{\ts+1})\Pr(\mflCT{\co}{\ts+1}|\mfCT{\co}{\ts},\jaG{\co})\nu_{IO}(\mfCT{\co}{\ts+1}).\label{eq:IO-BackProj_POMDP}
\end{multline}

Since this equation does not depend in any way on the d-separating
sets and influence, we can completely avoid generating large IASPs.
As for implementation, many POMDP solution methods~\cite{Cassandra97UAI,Kaelbling98AI}
are based on such back-projections and therefore can be easily modified;
all that is required is to substitute these the back-projections by
their modified form \eqref{eq:IO-BackProj_POMDP}. When combined with
an \emph{exact }POMDP solver, such influence-optimistic back-ups will
lead to an upper bound $\argsC{\hat{\V}}{\co}^{P}$, to which we refer
as \emph{IO-Q-MPOMDP}, on the locally-optimal value. 

To formally prove this claim, we will need to discriminate a few different
types of value, and associated constructs. Let us define:
\begin{itemize}
\item $\IASPbelIT{\ifpiC{\co}}{\ts}(\langle\mfCT{\co}{\ts},\dsetCT{\co}{\ts+1}\rangle)$,
an MPOMDP belief for the IASP induced by an arbitrary influence $\ifpiC{\co}$,
\item $\V_{\ifpiC{\co}}^{P}$, the optimal value function when the IASP
is solved as an MPOMDP, such that $\V_{\ifpiC{\co}}^{P}(\IASPbelIT{\ifpiC{\co}}{\ts})$
is the value of $\IASPbelIT{\ifpiC{\co}}{\ts}$ and $\mathit{\V}_{\co}^{MPOMDP}(\ifpiC{\co})\defas\V_{\ifpiC{\co}}^{P}(\IASPbelIT{\ifpiC{\co}}0)$.
$\V_{\ifpiC{\co}}^{P}$ is represented using vectors $\nu\in\mathcal{V}$.
\item $\IASPbelIT{IO}{\ts}(\mfCT{\co}{\ts})$, an arbitrary distribution
over $\mfCT{\co}{\ts}$ that can be thought of as the MPOMDP belief
for the `influence optimistic SP'.%
\footnote{For instance, in the case of FFG from \fig\ref{fig:sub-problem},
we can imagine an SP that encodes optimistic assumptions by assuming
that the neighboring agents always will fight fire at houses $i$
and $i+2$. Even though it may not be possible to define such a model
for all problems---the optimistic influence could depend on the local
(belief) state in intricate ways---this gives some interpretation
to $\IASPbelIT{IO}{\ts}(\mfCT{\co}{\ts}).$ Additionally, we exploit
the fact that construction of an optimistic SP model is possible for
the considered domains in \sect\ref{sec:evaluation}.%
}
\item $\V_{IO,\co}^{P}$, the value function computed by an (exact) influence-optimistic
MPOMDP method, that assigns a value $\V_{IO,\co}^{P}(\IASPbelIT{IO}{\ts})$
to any $\IASPbelIT{IO}{\ts}$. $\V_{IO,\co}^{P}$ is represented using
vectors $\nu_{IO}\in\mathcal{V}_{IO}$. The IO-Q-MPOMDP upper bound
is defined by plugging in the true initial state distribution~$\bO$,
restricted to factors in $\co$: $\argsC{\hat{\V}}{\co}^{P}\defas\V_{\co}^{P}(\IASPbelIT{IO}0)$. 
\end{itemize}
First we establish a relation between the different vectors representing
$\V_{\ifpiC{\co}}^{P}$ and $\V_{IO,\co}^{P}$.

\begin{lemma}\label{lem:maxD_v__leq__vIO}\textbf{L}et $\jpolG{\co}^{t:h-1}$
be a $(h-\ts)$-steps-to-go policy. Let $\nu\in\mathcal{V}$ and $\nu_{IO}\in\mathcal{V}_{IO}$
be the vectors induced by $\jpolG{\co}^{t:h-1}$ under regular MPOMDP
back-projections (for some $\ifpiC{\co}$), and under IO back-projections
respectively. Then 
\[
\forall_{\mfCT{\co}{\ts}}\qquad\max_{\dsetCT{\co}{\ts+1}}\nu(\langle\mfCT{\co}{\ts},\dsetCT{\co}{\ts+1}\rangle)\leq\nu_{IO}(\mfCT{\co}{\ts}).
\]
 \textbf{}\begin{proof}The proof is listed in \app\ref{sec:app:proofs}.\end{proof}\end{lemma}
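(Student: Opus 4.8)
The plan is to prove the inequality by backward induction on the number of steps-to-go $h-\ts$, exploiting the fact that both $\nu$ and $\nu_{IO}$ are the value vectors induced by the \emph{same fixed} policy $\jpolG{\co}^{t:h-1}$. Because the policy is fixed there is no maximization over vectors in the back-projection recursion: the root action $\jaGT{\co}{\ts}$ and, for each local observation $\joG{\co}$, the continuation sub-policy are both determined by $\jpolG{\co}^{t:h-1}$, so $\nu$ and $\nu_{IO}$ satisfy matching recursions (one using the regular back-projection of \eqref{eq:BackProj_POMDP} through the influence-augmented model \eqref{eq:induced_T}, the other using the influence-optimistic back-projection \eqref{eq:IO-BackProj_POMDP}). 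At the base of the recursion (the terminal stage) both vectors reduce to the immediate-reward vector $\RC{\co}$, which carries no dependence on the d-separating set, so $\max_{\dsetCT{\co}{\ts+1}}\nu=\nu=\nu_{IO}$ and the claim holds with equality.

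For the inductive step I would first write out the regular back-projection of the continuation vectors through \eqref{eq:induced_T}, recalling that the d-separating component of the augmented state transitions \emph{deterministically} (as noted below \eqref{eq:induced_T}): the next-stage $\dsetCT{\co}{\ts+2}$ is a fixed function of $\mfCT{\co}{\ts}$, $\jaGT{\co}{\ts}$, $\mfCT{\co}{\ts+1}$ and $\dsetCT{\co}{\ts+1}$. Fixing the maximizing $\dsetCT{\co}{\ts+1}^{*}$ on the left-hand side, I would invoke the induction hypothesis on the continuation vectors, applied precisely at the deterministically-induced $\dsetCT{\co}{\ts+2}$ and then weakened to the max over all d-separating sets: $\nu(\langle\mfCT{\co}{\ts+1},\dsetCT{\co}{\ts+2}\rangle)\leq\max_{\dsetCT{\co}{\ts+2}}\nu(\langle\mfCT{\co}{\ts+1},\dsetCT{\co}{\ts+2}\rangle)\leq\nu_{IO}(\mfCT{\co}{\ts+1})$. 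Since all coefficients appearing in the back-projection (observation probabilities, OLAF and NLAF transition probabilities, the discount, and the influence weights) are non-negative, replacing each continuation value by the larger $\nu_{IO}(\mfCT{\co}{\ts+1})$ preserves the inequality.

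The crux is the final step, where the remaining expression still contains the influence weighting $\sum_{\ifsCT{\co}{\ts+1}}\Pr(\mfnCT{\co}{\ts+1}\mid\cdots,\ifsCT{\co}{\ts+1})\,\iffunc(\ifsCT{\co}{\ts+1}\mid\dsetCT{\co}{\ts+1}^{*})$ inherited from \eqref{eq:induced_T}. The key observation is that $\iffunc(\cdot\mid\dsetCT{\co}{\ts+1}^{*})$ is a genuine probability distribution over the influence sources (it sums to one), so for each fixed observation $\joG{\co}$ the $\iffunc$-weighted average over $\ifsCT{\co}{\ts+1}$ is bounded above by the \emph{maximum} over $\ifsCT{\co}{\ts+1}$. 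Carrying out this average-$\leq$-max bound separately inside each observation term produces exactly the influence-optimistic back-projection \eqref{eq:IO-BackProj_POMDP}, whose per-observation sum reconstructs $\nu_{IO}(\mfCT{\co}{\ts})$. I expect the main obstacle to be bookkeeping the interplay of three operations in the correct order: the maximization over $\dsetCT{\co}{\ts+1}$ sits on the outside, the IO maximization over $\ifsCT{\co}{\ts+1}$ is taken per-observation on the inside, and the induction hypothesis must be applied at the induced $\dsetCT{\co}{\ts+2}$ before being relaxed. Getting this ordering right---together with the normalization of $\iffunc$ that legitimizes the average-$\leq$-max step---is what makes the argument go through.
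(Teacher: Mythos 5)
Your proposal is correct and follows essentially the same argument as the paper's proof: backward induction with the terminal-stage reward vectors as the (equality) base case, applying the induction hypothesis to the continuation vectors (via the deterministically induced $\dsetCT{\co}{\ts+2}$, using non-negativity of the probability coefficients), and then bounding the $\iffunc$-weighted average over influence sources by the per-observation maximum — exactly the paper's ``max of a function is greater than its expectation'' step — after which the dependence on $\dsetCT{\co}{\ts+1}$ vanishes and the outer maximization becomes vacuous. The ordering of the three operations you flag as the crux (outer d-set max, inner per-observation IO max, IH applied before relaxation) is precisely how the paper's appendix proof proceeds.
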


This lemma provides a strong result on the relation of values computed
under regular MPOMDP backups versus influence-optimistic ones. It
allows us to establish the following theorem:

\begin{theorem}\label{thm:QMMDP_leq_IOQMMDP}For an SP $\co$, for
all $\ifpiC{\co}$, 
\[
\forall_{\IASPbelIT{\ifpiC{\co}}{\ts}}\qquad\V_{\ifpiC{\co}}^{P}(\IASPbelIT{\ifpiC{\co}}{\ts})\leq\V_{IO,\co}^{P}(\IASPbelIT{IO}{\ts}),
\]
provided that $\IASPbelIT{IO}{\ts}$ concides with the marginals of
$\IASPbelIT{\ifpiC{\co}}{\ts}$: 
\[
\text{(A1)}\hspace{1em}\forall_{\mfCT{\co}{\ts}}\;\sum_{\dsetCT{\co}{\ts+1}}\IASPbelIT{\ifpiC{\co}}{\ts}(\langle\mfCT{\co}{\ts},\dsetCT{\co}{\ts+1}\rangle)=\IASPbelIT{IO}{\ts}(\mfCT{\co}{\ts}).
\]
\end{theorem}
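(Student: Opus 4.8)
The plan is to exploit the standard fact that a (finite-horizon) MPOMDP value function is piecewise-linear and convex, i.e.\ the upper envelope of a finite family of value vectors, each of which is the back-projection induced by a particular local conditional plan $\jpolG{\co}^{\ts:h-1}$. Concretely I would write $\V_{\ifpiC{\co}}^{P}(\IASPbelIT{\ifpiC{\co}}{\ts})=\max_{\nu\in\mathcal{V}}\inprod{\IASPbelIT{\ifpiC{\co}}{\ts}}{\nu}$ and $\V_{IO,\co}^{P}(\IASPbelIT{IO}{\ts})=\max_{\nu_{IO}\in\mathcal{V}_{IO}}\inprod{\IASPbelIT{IO}{\ts}}{\nu_{IO}}$, noting that the vectors in $\mathcal{V}$ and $\mathcal{V}_{IO}$ are indexed by the \emph{same} set of plans, since the IASP and the influence-optimistic model share the action/observation structure and differ only in the back-projection used. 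Lemma~\ref{lem:maxD_v__leq__vIO} is exactly the bridge between the two vector families: for any fixed plan it bounds the regular-backup vector, maximized over d-separating sets, by the IO-backup vector.

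First I would fix an arbitrary $\ifpiC{\co}$, stage $\ts$, and belief $\IASPbelIT{\ifpiC{\co}}{\ts}$, and let $\nu^{*}\in\mathcal{V}$ be the maximizing vector, so $\V_{\ifpiC{\co}}^{P}(\IASPbelIT{\ifpiC{\co}}{\ts})=\inprod{\IASPbelIT{\ifpiC{\co}}{\ts}}{\nu^{*}}$. This $\nu^{*}$ is induced by some plan $\jpolG{\co}^{\ts:h-1}$; let $\nu_{IO}^{*}\in\mathcal{V}_{IO}$ be the vector induced by the \emph{same} plan under IO back-projections. Applying Lemma~\ref{lem:maxD_v__leq__vIO} to this plan gives $\max_{\dsetCT{\co}{\ts+1}}\nu^{*}(\langle\mfCT{\co}{\ts},\dsetCT{\co}{\ts+1}\rangle)\leq\nu_{IO}^{*}(\mfCT{\co}{\ts})$ for all $\mfCT{\co}{\ts}$, hence, since each entry is at most the maximum, $\nu^{*}(\langle\mfCT{\co}{\ts},\dsetCT{\co}{\ts+1}\rangle)\leq\nu_{IO}^{*}(\mfCT{\co}{\ts})$ for every $\dsetCT{\co}{\ts+1}$ as well.

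Next I would expand the inner product as a double sum and push this pointwise bound through, using only non-negativity of beliefs together with assumption (A1):
\begin{multline*}
\inprod{\IASPbelIT{\ifpiC{\co}}{\ts}}{\nu^{*}}=\sum_{\mfCT{\co}{\ts}}\sum_{\dsetCT{\co}{\ts+1}}\IASPbelIT{\ifpiC{\co}}{\ts}(\langle\mfCT{\co}{\ts},\dsetCT{\co}{\ts+1}\rangle)\,\nu^{*}(\langle\mfCT{\co}{\ts},\dsetCT{\co}{\ts+1}\rangle)\\
\leq\sum_{\mfCT{\co}{\ts}}\nu_{IO}^{*}(\mfCT{\co}{\ts})\sum_{\dsetCT{\co}{\ts+1}}\IASPbelIT{\ifpiC{\co}}{\ts}(\langle\mfCT{\co}{\ts},\dsetCT{\co}{\ts+1}\rangle)=\sum_{\mfCT{\co}{\ts}}\nu_{IO}^{*}(\mfCT{\co}{\ts})\,\IASPbelIT{IO}{\ts}(\mfCT{\co}{\ts}),
\end{multline*}
where the final equality is precisely (A1). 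The right-hand side is $\inprod{\IASPbelIT{IO}{\ts}}{\nu_{IO}^{*}}$, which is at most $\max_{\nu_{IO}\in\mathcal{V}_{IO}}\inprod{\IASPbelIT{IO}{\ts}}{\nu_{IO}}=\V_{IO,\co}^{P}(\IASPbelIT{IO}{\ts})$, because the optimal IO value lower-bounds the value of following any one specific plan. Chaining the (in)equalities gives $\V_{\ifpiC{\co}}^{P}(\IASPbelIT{\ifpiC{\co}}{\ts})\leq\V_{IO,\co}^{P}(\IASPbelIT{IO}{\ts})$; since $\ifpiC{\co}$ and the belief were arbitrary, the theorem follows.

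The computational content sits entirely in Lemma~\ref{lem:maxD_v__leq__vIO}; given it, the rest is bookkeeping. I therefore expect the main obstacle not to be the inequality chain but justifying the vector/plan correspondence cleanly: one must argue that the maximizing $\nu^{*}$ genuinely arises from a conditional plan whose IO counterpart is realizable as a vector $\nu_{IO}^{*}$, so that the last step to $\V_{IO,\co}^{P}$ is legitimate. Because both $\V_{\ifpiC{\co}}^{P}$ and $\V_{IO,\co}^{P}$ are \emph{optimal} (exact) MPOMDP value functions over the common plan space, this is immediate---the optimal value only helps when we take the max over $\mathcal{V}_{IO}$---but I would state the representation facts explicitly to avoid any reliance on which dominated vectors survive pruning. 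A secondary point to get right is the order of operations: the pointwise bound must be applied per $\mfCT{\co}{\ts}$ \emph{before} summing over $\dsetCT{\co}{\ts+1}$, so that (A1) can be invoked to recover the marginal belief $\IASPbelIT{IO}{\ts}$.
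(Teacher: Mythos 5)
Your proposal is correct and follows essentially the same route as the paper's own proof: expand the value as an inner product with a maximizing vector, apply Lemma~\ref{lem:maxD_v__leq__vIO} as the bridge between the two vector families, invoke (A1) to recover the marginal belief, and finish by noting the maximum over $\mathcal{V}_{IO}$ dominates the specific IO vector for the same plan. The only cosmetic difference is that you fix the maximizer $\nu^{*}$ up front and push the pointwise bound through, whereas the paper carries $\max_{\nu\in\mathcal{V}}$ along the inequality chain; the logical content is identical.
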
\begin{proof}  We start with the left hand side:
\begin{align*}
 & \hskip-3emV_{\ifpiC{\co}}^{P}(\IASPbelT{\ts})=\max_{\nu\in\mathcal{V}}\inprod{\IASPbelIT{\ifpiC{\co}}{\ts}}{\nu}\\
= & \max_{\nu\in\mathcal{V}}\sum_{\langle\mfCT{\co}{\ts},\dsetCT{\co}{\ts+1}\rangle}\IASPbelIT{\ifpiC{\co}}{\ts}(\langle\mfCT{\co}{\ts},\dsetCT{\co}{\ts+1}\rangle)\nu(\langle\mfCT{\co}{\ts},\dsetCT{\co}{\ts+1}\rangle)\\
\leq & \max_{\nu\in\mathcal{V}}\sum_{\mfCT{\co}{\ts}}\sum_{\dsetCT{\co}{\ts+1}}\IASPbelIT{\ifpiC{\co}}{\ts}(\langle\mfCT{\co}{\ts},\dsetCT{\co}{\ts+1}\rangle)\max_{\dsetCT{\co}{\ts+1}}\nu(\langle\mfCT{\co}{\ts},\dsetCT{\co}{\ts+1}\rangle)\\
\text{\{A1\}}= & \max_{\nu\in\mathcal{V}}\sum_{\mfCT{\co}{\ts}}\IASPbelIT{IO}{\ts}(\mfCT{\co}{\ts})\max_{\dsetCT{\co}{\ts+1}}\nu(\langle\mfCT{\co}{\ts},\dsetCT{\co}{\ts+1}\rangle)\\
 & \text{\{\lem\ref{lem:maxD_v__leq__vIO}\}}\\
\leq & \max_{\nu_{IO}\in\mathcal{V}_{IO}}\sum_{\mfCT{\co}{\ts}}\IASPbelIT{IO}{\ts}(\mfCT{\co}{\ts})\nu_{IO}(\mfCT{\co}{\ts})\\
= & \max_{\nu_{IO}\in\mathcal{V}_{IO}}\inprod{\IASPbelIT{IO}{\ts}}{\nu_{IO}}\\
= & \, V_{IO,\co}^{P}(\IASPbelIT{IO}{\ts}),
\end{align*}
thus proving the theorem.\end{proof}

\begin{corollary}\label{cor:IO-Q-MPOMDP-is-UB}IO-Q-MPOMDP yields
an upper bound to the locally-optimal value: $V_{\co}^{LO}\leq\argsC{\hat{\V}}{\co}^{P}$.\end{corollary}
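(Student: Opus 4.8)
The plan is to mirror the three-link chain of inequalities used in the proof of \thm\ref{thm:IO-Q-MMDP-is-UB}, but with the MPOMDP relaxation in place of the MMDP one, discharging the influence-optimism step by invoking \thm\ref{thm:QMMDP_leq_IOQMMDP} rather than re-deriving it. Concretely, I would fix the \emph{locally-optimal} influence $\ifpiC{\co}^{LO}\defas\ifpiC{\co}(\jpolG{\excl{\co}}^{LO})$, where $\jpolG{\excl{\co}}^{LO}$ is the maximizing argument of \eqref{eq:V^LO}, so that by the very definition of the locally-optimal value we have $V_{\co}^{LO}=V_{\co}(\ifpiC{\co}^{LO})$, the Dec-POMDP value of the IASP induced by that influence.

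From here the argument proceeds in two steps. First I would bound this Dec-POMDP value by the value obtained when the same IASP is solved as an MPOMDP: since granting the team free communication (the MPOMDP assumption) can only increase the achievable value, the known fact that Q-MPOMDP upper-bounds the Dec-POMDP value gives $V_{\co}(\ifpiC{\co}^{LO})\leq\mathit{\V}_{\co}^{MPOMDP}(\ifpiC{\co}^{LO})=\V_{\ifpiC{\co}^{LO}}^{P}(\IASPbelIT{\ifpiC{\co}^{LO}}0)$. Second, I would apply \thm\ref{thm:QMMDP_leq_IOQMMDP} at stage $\ts=0$ with the influence $\ifpiC{\co}^{LO}$ to pass from the genuine MPOMDP value of the IASP to the influence-optimistic value, yielding $\V_{\ifpiC{\co}^{LO}}^{P}(\IASPbelIT{\ifpiC{\co}^{LO}}0)\leq\V_{IO,\co}^{P}(\IASPbelIT{IO}0)=\argsC{\hat{\V}}{\co}^{P}$. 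Chaining the two inequalities with the identity $V_{\co}^{LO}=V_{\co}(\ifpiC{\co}^{LO})$ then gives $V_{\co}^{LO}\leq\argsC{\hat{\V}}{\co}^{P}$, which is exactly the claim.

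The one step that requires genuine care — and which I expect to be the main obstacle — is verifying that hypothesis (A1) of \thm\ref{thm:QMMDP_leq_IOQMMDP} is met at $\ts=0$, i.e., that the optimistic initial belief $\IASPbelIT{IO}0$ equals the local-state marginal of the IASP initial belief $\IASPbelIT{\ifpiC{\co}^{LO}}0$. I would argue this holds by construction: both are derived from the same factored initial state distribution $\bO$ restricted to the factors in $\co$. The IASP belief at stage $0$ is the distribution over augmented states $\langle\mfCT{\co}0,\dsetCT{\co}1\rangle$ obtained from $\bO$, and marginalizing out the d-separating component $\dsetCT{\co}1$ returns precisely the $\bO$-induced distribution over $\mfCT{\co}0$ that defines $\IASPbelIT{IO}0$. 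The point to make explicit is that the initial influence only governs transitions out of stage $0$ and does not touch the stage-$0$ local-state marginal, so (A1) holds uniformly in the choice of $\ifpiC{\co}^{LO}$; once that is pinned down, the corollary follows immediately from \thm\ref{thm:QMMDP_leq_IOQMMDP}.
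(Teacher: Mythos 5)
Your proposal is correct and follows essentially the same route as the paper's proof: both verify condition (A1) at the initial belief (where the two beliefs coincide with $\bO$ restricted to the factors in $\co$), apply \thm\ref{thm:QMMDP_leq_IOQMMDP} at stage $0$, invoke the known fact that the MPOMDP value upper-bounds the Dec-POMDP value, and chain these with the identity from \eqref{eq:V^LO} at the locally-optimal influence. The only difference is presentational---you chain the inequalities upward from $V_{\co}^{LO}$ whereas the paper chains downward from $\argsC{\hat{\V}}{\co}^{P}$---which is immaterial.
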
\begin{proof}The
initial beliefs are defined such that the above condition (A1) holds.
That is:
\[
\sum_{\dsetCT{\co}{\ts+1}}\IASPbelIT{\ifpiC{\co}}0(\langle\mfCT{\co}0,\dsetCT{\co}1=\emptyset\rangle)=\IASPbelIT{IO}0(\mfCT{\co}{\ts})=\bO(\mfCT{\co}{\ts}).
\]
Therefore, application of \thm\ref{thm:QMMDP_leq_IOQMMDP} to the
initial belief yields: 
\[
\forall_{\ifpiC{\co}}\;\argsC{\hat{\V}}{\co}^{P}\defas\V_{IO,\co}^{P}(\IASPbelIT{IO}0)\geq\V_{\ifpiC{\co}}^{P}(\IASPbelIT{\ifpiC{\co}}0)\defas\V_{\co}^{\mathit{MPOMDP}}(\ifpiC{\co})
\]
It is well-known that the MPOMDP value is an upper bound to the Dec-POMDP
value~\cite{Oliehoek08JAIR}, such that 
\[
V_{\co}^{\mathit{MPOMDP}}(\ifpiC{\co}(\jpolG{\excl{\co}}^{LO}))\geq V_{\co}(\ifpiC{\co}(\jpolG{\excl{\co}}^{LO})),
\]
and we can immediately conclude that
\[
\argsC{\hat{\V}}{\co}^{P}\geq V_{\co}(\ifpiC{\co}(\jpolG{\excl{\co}}^{LO}))=\max_{\jpolG{\excl{\co}}}V_{\co}(\ifpiC{\co}(\jpolG{\excl{\co}}))=V_{\co}^{LO},
\]
with the identities given by \eqref{eq:V^LO}, proving the result.\end{proof}

\subsection{A Dec-POMDP Approach}

\label{sec:UB:Dec-POMDP}

The previous approaches compute upper bounds by, apart from the IO
assumption, additionally making optimistic assumptions on observability
or communication capabilities.  Here we present a general method
for computing \emph{Dec-POMDP-based upper bounds} that, other than
the optimistic assumptions about neighboring SPs, make no additional
assumptions and thus provide the tightest bounds out of the three
that we propose. This approach builds on the recent insight \cite{MacDermed13NIPS26,Dibangoye13IJCAI,Oliehoek13JAIR}
that a Dec-POMDP can be converted to a special case of POMDP (for
an overview of this reduction, see \cite{Oliehoek14IASTR}); we can
thereby leverage the influen\-ce-optimistic back-projection \eqref{eq:IO-BackProj_POMDP}
to compute an IO-UB that we refer to as \emph{IO-Q-Dec-POMDP.}

As\textbf{ }in the previous two sub-sections, we will leverage optimism
with respect to an influence-augmented model that we will never need
to construct. In particular, as explained in \sect\ref{sec:sub-problems}
we can convert an SP $\mathcal{M}_{\co}$ to an IASP $\mathcal{M}_{\co}^{IA}$
given an influence $\ifpiC{\co}$. Since such an IASP is a Dec-POMDP,
we can convert it to a special case of a POMDP:

\begin{definition}{\sloppypar A \emph{plan-time influence-augmented
sub-problem, }$\mathcal{M}_{\co}^{\text{PT-IA}}$, is a tuple $\mathcal{M}_{\co}^{\text{PT-IA}}(\mathcal{M}_{\co},\ifpiC{\co})=\left\langle \PT{\sS},\PT{\aAS\agNONE},\PT T_{\ifpiC{\co}},\PT R,\PT{\oAS\agNONE},\PT O,\PT{\hor},\PT{\bO}\right\rangle $,
where: 
\begin{itemize}
\item $\PT{\sS}$ is the set of states $\PTIASPstateT{\ts}=\left\langle \IASPstateT{\ts},\oHistGT{\co}\ts\right\rangle =\left\langle \mfCT{\co}{\ts},\dsetCT{\co}{\ts+1},\oHistGT{\co}\ts\right\rangle $.
\item $\PT{\aAS\agNONE}$ is the set of actions, each $\PTaT{\ts}$ corresponds
to a local joint decision rule $\jdrGT{\co}{\ts}$ in the SP.
\item $\PT{\Tfunc}_{\ifpiC{\co}}(\PTIASPstateT{\ts+1}|\PTIASPstateT{\ts},\PTaT{\ts})$
is the transition function defined below.
\item $\PT R(\PTIASPstateT{\ts},\PTaT{\ts})=\RC{\co}(\mfCT{\co}{\ts},\jdrGT{\co}{\ts}(\oHistGT{\co}\ts))$.
\item $\PT{\oAS\agNONE}=\left\{ \mathit{NULL}\right\} $.
\item $\PT O$ specifies that observation $\mathit{NULL}$ is received with
probability 1 (irrespective of the state and action).
\item The horizon is not modified: $\PT{\hor}=\hor$.
\item $\PT{\bO}$ is the initial state distribution. Since there is only
one $\oHistT{0}$ (i.e., the empty joint observation history).
\end{itemize}
}

The transition function specifies:
\begin{multline*}
\PT{\Tfunc}_{\ifpiC{\co}}(\left\langle \mfCT{\co}{\ts+1},\dsetCT{\co}{\ts+2},\oHistGT{\co}{\ts+1}\right\rangle |\left\langle \mfCT{\co}{\ts},\dsetCT{\co}{\ts+1},\oHistGT{\co}\ts\right\rangle ,\jdrGT{\co}{\ts})\\
\defas\bar{\Tfunc}_{\ifpiC{\co}}(\mfCT{\co}{\ts+1}|\langle\mfCT{\co}{\ts},\dsetCT{\co}{\ts+1}\rangle,\jdrGT{\co}{\ts}(\oHistGT\co\ts))\bar{\Ofunc}(\joGT{\co}{\ts+1}|\jdrGT{\co}{\ts}(\oHistGT\co\ts),\mfCT{\co}{\ts+1})
\end{multline*}
if $\oHistGT\co{\ts+1}=\left(\oHistGT\co{\ts},\joGT{\co}{\ts+1}\right)$
and 0 otherwise. In this equation $\bar{\Tfunc}_{\ifpiC{\co}},\bar{\Ofunc}$
are given by the IASP (cf. \sect\ref{sec:sub-problems}).%
\footnote{Remember that $\dsetCT{\co}{\ts+2}$ is a function of the specified
quantities: $\dsetCT{\co}{\ts+2}=d(\mfCT{\co}{\ts},\dsetCT{\co}{\ts+1},\jdrGT{\co}{\ts}(\oHistGT\co\ts),\mfCT{\co}{\ts+1})$.%
}

\end{definition}

This reduction shows that it is possible to compute $V_{c}^{*}(\ifpiC{\co}(\jpolG{\excl{\co}}))$,
the optimal value for an SP \emph{given an influence point $\ifpiC{\co}$},
but the formulation is subject to the same computational burden as
solving a regular IASP: constructing it is complex due to the inference
that needs to be performed to compute $\ifpiC{\co}$, and subsequently
solving the IASP is complex due to the large number of augmented states
$\PTIASPstateT{\ts}=\left\langle \mfCT{\co}{\ts},\dsetCT{\co}{\ts+1},\oHistGT{\co}\ts\right\rangle $.

Fortunately, here too we can compute an upper bound to \emph{any}~feasible
incoming influence, and thus to $V_{c}^{LO}$, by using optimistic
backup operations with respect to an underspecified model, to which
we refer as simply \emph{plan-time SP:}

\begin{definition}\sloppypar We define the \emph{plan-time sub-problem
$\mathcal{M}_{\co}^{\text{PT}}$ }as an under-specified POMDP $\mathcal{M}_{\co}^{\text{PT}}(\mathcal{M}_{\co},\cdot)=\left\langle \PT{\sS},\PT{\aAS\agNONE},\PT T_{(\cdot)},\PT R,\PT{\oAS\agNONE},\PT O,\PT{\hor},\PT{\bO}\right\rangle $
with 
\begin{itemize}
\item states of the form $\PTIASPstateT{\ts}=\left\langle \mfCT{\co}{\ts},\oHistGT{\co}\ts\right\rangle $,
\item an underspecified transition model 
\begin{multline*}
\PT{\Tfunc_{(\cdot)}}(\PTIASPstateT{\ts+1}|\PTIASPstateT{\ts},\PTaT{\ts})\defas\\
\Tfunc_{\co}(\mfCT{\co}{\ts+1}|\mfCT{\co}{\ts},\jdrGT{\co}{\ts}(\oHistGT\co\ts),\ifsCT{\co}{\ts+1})\Ofunc_{\co}(\joGT{\co}{\ts+1}|\jdrGT{\co}{\ts}(\oHistGT\co\ts),\mfCT{\co}{\ts+1}),
\end{multline*}

\item and $\PT{\aAS\agNONE},\PT R,\PT{\oAS\agNONE},\PT O,\PT{\hor},\PT{\bO}$
as above.
\end{itemize}
\end{definition}

Since this model is a special case of a POMDP, the theory developed
in \sect\ref{sec:QMPOMDP-based} applies: we can maintain a plan-time
sufficient statistic $\ptssIT\co\ts$ (essentially the `belief'
$\PT b$ over augmented states $\PTIASPstateT{\ts}=\left\langle \mfCT{\co}{\ts},\oHistGT{\co}\ts\right\rangle $)
and we can write down the value function using \eqref{eq:Q-POMDP--back-proj-based}.
Most importantly, the IO back-projection~\eqref{eq:IO-BackProj_POMDP}
also applies, which means that (similar to the MPOMDP case) we can
avoid ever constructing the full PT-IASP. The IO back-projec\-tion
in this case translates to:
\begin{multline}
\nu_{IO}^{\jdrGT{\co}{\ts}}(\mfCT{\co}{\ts},\oHistGT{\co}\ts)\defas\max_{\ifsCT{\co}{\ts+1}}\sum_{\mfCT{\co}{\ts+1}}\Pr(\joGT{\co}{\ts+1}|\jdrGT{\co}{\ts}(\oHistGT\co\ts),\mfCT{\co}{\ts+1})\\
\Pr(\mfnCT{\co}{\ts+1}|\mfCT{\co}{\ts},\jdrGT{\co}{\ts}(\oHistGT\co\ts),\ifsCT{\co}{\ts+1})\Pr(\mflCT{\co}{\ts+1}|\mfCT{\co}{\ts},\jdrGT{\co}{\ts}(\oHistGT\co\ts))\\
\nu_{IO}(\mfCT{\co}{\ts+1},\oHistGT{\co}{\ts+1}).\label{eq:optimBackProj-DecPOMDP}
\end{multline}
Here, we omitted the superscript for the $\mathit{NULL}$ observation.
Also, note that $O(\oAT i{\ts+1}|\aAT i{\ts},\mfAT i{\ts+1})$ in
\eqref{eq:IO-BackProj_POMDP} corresponds to the \emph{$\mathit{NULL}$
}observation in the PT model, but since the observation histories
are in the states, $\Pr(\joGT{\co}{\ts+1}|\jdrGT{\co}{\ts}(\oHistGT\co\ts),\mfCT{\co}{\ts+1})$
comes out of the \emph{transition}~model.

Again, given this modified back-projection, the IO-Q-Dec-POMDP value
$\argsC{\hat{\V}}{\co}^{D}$ can be computed using any \emph{exact}
POMDP solution method that makes use of vector back-projections; all
that is required is to substitute these the back projections by their
modified form \eqref{eq:optimBackProj-DecPOMDP}.

\begin{corollary}\label{cor:IO-Q-Dec-POMDP-is-UB}IO-Q-Dec-POMDP
yields an upper bound to the locally-optimal value: $V_{\co}^{LO}\leq\argsC{\hat{\V}}{\co}^{D}$.\end{corollary}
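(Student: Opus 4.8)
The plan is to mirror the argument used for IO-Q-MPOMDP (Corollary~\ref{cor:IO-Q-MPOMDP-is-UB}), but carried out over the plan-time POMDP reduction rather than the MPOMDP. The structural fact I would exploit is that, unlike the MMDP and MPOMDP relaxations, the reduction of an IASP to the plan-time IASP $\mathcal{M}_{\co}^{\text{PT-IA}}$ is \emph{exact}: solving $\mathcal{M}_{\co}^{\text{PT-IA}}$ as a POMDP yields precisely $V_{\co}(\ifpiC{\co})$, the optimal Dec-POMDP value of the IASP for the given influence. Consequently the only source of over-estimation in $\argsC{\hat{\V}}{\co}^{D}$ is the influence-optimism built into \eqref{eq:optimBackProj-DecPOMDP}, and the final inequality $V_{\co}^{LO}\leq\argsC{\hat{\V}}{\co}^{D}$ will follow \emph{without} the extra relaxation step $V_{\co}^{\mathit{MPOMDP}}\geq V_{\co}$ that appeared in the MPOMDP proof.

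First I would observe that both $\mathcal{M}_{\co}^{\text{PT-IA}}$ and the plan-time SP $\mathcal{M}_{\co}^{\text{PT}}$ are special cases of POMDPs whose augmented states share the common pair $(\mfCT{\co}{\ts},\oHistGT{\co}\ts)$ and differ only in that the fully specified model additionally carries the d-separating set $\dsetCT{\co}{\ts+1}$. This is exactly the situation of \sect\ref{sec:QMPOMDP-based}, with the single local state there replaced by the pair $(\mfCT{\co}{\ts},\oHistGT{\co}\ts)$ and with the IO back-projection \eqref{eq:IO-BackProj_POMDP} instantiated as \eqref{eq:optimBackProj-DecPOMDP}. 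I would therefore re-derive the analogue of \lem\ref{lem:maxD_v__leq__vIO}: for any steps-to-go policy, the IO-back-projected vector $\nu_{IO}$ dominates the d-separating-set maximum of the regularly back-projected vector $\nu$, i.e.\ $\max_{\dsetCT{\co}{\ts+1}}\nu(\langle\mfCT{\co}{\ts},\dsetCT{\co}{\ts+1},\oHistGT{\co}\ts\rangle)\leq\nu_{IO}(\mfCT{\co}{\ts},\oHistGT{\co}\ts)$. The induction step is the same as in \lem\ref{lem:maxD_v__leq__vIO}, because the observation-history component is propagated deterministically and identically in both models, while the maximization over $\ifsCT{\co}{\ts+1}$ in \eqref{eq:optimBackProj-DecPOMDP} dominates the averaging over influence sources that the d-separating set induces.

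Next I would transcribe the chain of (in)equalities from the proof of \thm\ref{thm:QMMDP_leq_IOQMMDP}, replacing the MPOMDP beliefs by the plan-time sufficient statistics $\ptssIT\co\ts$ and invoking the just-established lemma analogue at the key step. This yields, under the marginal-matching condition (A1) stated for the marginal over $\dsetCT{\co}{\ts+1}$ of the PT-IA statistic, that $\V_{\ifpiC{\co}}^{\text{PT-IA}}(\cdot)\leq\V_{IO,\co}^{\text{PT}}(\cdot)$ for every feasible influence $\ifpiC{\co}$. Applying this at the initial statistic---where (A1) holds trivially, since $\dsetCT{\co}1=\emptyset$ and $\oHistGT{\co}0$ is the empty history, so both statistics collapse to $\bO$ restricted to the factors of $\co$---gives $\argsC{\hat{\V}}{\co}^{D}\geq\V_{\ifpiC{\co}}^{\text{PT-IA}}(\text{init})=V_{\co}(\ifpiC{\co})$, with equality on the right by exactness of the reduction.

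Finally, since this holds for \emph{every} feasible influence, it holds in particular for $\ifpiC{\co}(\jpolG{\excl{\co}}^{LO})$, the maximizer in \eqref{eq:V^LO}; combined with \eqref{eq:V^LO} this gives $\argsC{\hat{\V}}{\co}^{D}\geq\max_{\jpolG{\excl{\co}}}V_{\co}(\ifpiC{\co}(\jpolG{\excl{\co}}))=V_{\co}^{LO}$, as claimed. The main obstacle I anticipate is the bookkeeping in re-deriving the \lem\ref{lem:maxD_v__leq__vIO} analogue: I must verify that keeping the observation history inside the state (and routing $\Pr(\joGT{\co}{\ts+1}\mid\jdrGT{\co}{\ts}(\oHistGT\co\ts),\mfCT{\co}{\ts+1})$ through the transition rather than the observation model, as noted after \eqref{eq:optimBackProj-DecPOMDP}) preserves the domination, and that the quantity maximized away is exactly the d-separating set---so that optimism over $\ifsCT{\co}{\ts+1}$ is genuinely optimism over the space of feasible influences and nothing more.
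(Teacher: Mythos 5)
Your proposal is correct and is essentially the paper's own argument: the paper proves this corollary in one line, ``Directly by applying \cor\ref{cor:IO-Q-MPOMDP-is-UB} to $\mathcal{M}_{\co}^{\text{PT}}$,'' and your proposal is precisely the unpacking of that invocation --- re-deriving the \lem\ref{lem:maxD_v__leq__vIO} analogue over augmented states $\langle\mfCT{\co}{\ts},\oHistGT{\co}\ts\rangle$, rerunning the chain of \thm\ref{thm:QMMDP_leq_IOQMMDP}, and checking (A1) at the initial statistic. Your added observation that the plan-time reduction is \emph{exact} (so the slack step $V_{\co}^{\mathit{MPOMDP}}\geq V_{\co}$ from the MPOMDP proof is not needed) is also implicit in the paper's appeal to the Dec-POMDP-to-POMDP reduction, and correctly identifies why the bound goes through.
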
\begin{proof}
Directly by applying \cor\ref{cor:IO-Q-MPOMDP-is-UB} to\emph{ }$\mathcal{M}_{\co}^{\text{PT}}.$\end{proof}

\subsection{Computational Complexity}

\label{sec:complexity-analysis}

Due to the maximization in \eqref{eq:IO-Q-QMMDP}, \eqref{eq:IO-BackProj_POMDP}
and \eqref{eq:optimBackProj-DecPOMDP}, IO back-projections are more
costly than regular (non-IO) back-projections. Here we analyze the
computational complexity of the proposed algorithms relative to the
regular, non-IO, backups.

We start by comparing IQ-Q-MMDP backup operation~\eqref{eq:IO-Q-QMMDP}
to the regular MMDP backup for an SP that does not have incoming influences.
For such an SP, the MMDP backup is given by
\begin{multline}
\negmedspace\negmedspace\negmedspace\negmedspace\Q(\mfCT{\co}{\ts},\jaGT{\co}{\ts})=\sum_{\mfCT{\co}{\ts+1}}\Pr(\mfCT{\co}{\ts+1}|\mfCT{\co}{\ts},\jaGT{\co}{\ts})\\
\Big[\RC{\co}(\mfCT{\co}{\ts},\jaGT{\co}{\ts},\mfCT{\co}{\ts+1})+\max_{\jaGT{\co}{\ts+1}}\Q(\mfCT{\co}{\ts+1},\jaGT{\co}{\ts+1})\Big].\label{eq:QMMDP}
\end{multline}
Comparing \eqref{eq:IO-Q-QMMDP} and \eqref{eq:QMMDP}, we see two
differences: 1) the transition probabilities can be written in one
term since we do not need to discriminate NLAFs from OLAFs, and 2)
there is no maximization over influence sources~$\ifsCT{\co}{\ts+1}$.
The first difference does not induce a change in computational cost,
only in notation: in both cases the entire transition probability
is given as the product of next-stage CPTs. The second difference
does induce a change in computational cost: in \eqref{eq:IO-Q-QMMDP},
in order to select the maximum, the inner part of the right-hand side
needs to be evaluated for each instantiation of the influence sources
$\left|\ifsCT{\co}{\ts+1}\right|$. That is, the computational complexity
of a Q-MMDP backup (for a particular $(\mfCT{\co}{\ts},\jaGT{\co}{\ts})$-pair)
is 
\[
O(\left|\sfacS_{\co}\right|\times\left|\jaG{\co}\right|),
\]
whereas the total computational cost of the IO-Q-MMDP backup is:
\[
O(\left|\ifsCT{\co}{\ts+1}\right|\times\left|\sfacS_{\co}\right|\times\left|\jaG{\co}\right|).
\]
As such, the complexity of each backup is multiplied by the number
of influence source instantiations $\left|\ifsCT{\co}{\ts+1}\right|$.
This means that the overhead of IO-Q-MMDP relative to Q-MMDP is $O(\left|\ifsCT{\co}{\ts+1}\right|)$. 

A similar comparison of the overhead of the IO-Q-MPOMDP back projection
\eqref{eq:IO-BackProj_POMDP} with respect to the regular Q-MPOMDP
back projection \eqref{eq:BackProj_POMDP} leads to exactly the same
relative overhead of $O(\left|\ifsCT{\co}{\ts+1}\right|)$. Since
IO-Q-Dec-POMDP makes use of this last result, also in this case the
relative overhead is $O(\left|\ifsCT{\co}{\ts+1}\right|)$. 

Concluding, the computational complexity of the methods we proposed
to compute IO-UBs is given by multiplying the computational complexity
of the `underlying' MMDP, MPOMDP or Dec-POMDP method with $\left|\ifsCT{\co}{\ts+1}\right|$.
This means that the relative overhead, is equal for all methods. Since
the amount of overhead is a linear function of $\left|\ifsCT{\co}{\ts+1}\right|$,
it is problem dependent: more densely coupled problems will lead to
a higher overhead, while very loosely coupled problems have a low
overhead.

\section{Global Upper Bounds}

\label{sec:global-upper-bounds}

We next discuss how the methods to compute local upper bounds can
be employed in order to compute a global upper bound for factored
Dec-POMDPs.

The basic idea is to apply a non-overlapping decomposition $\mathcal{C}$
(i.e., a partitioning) of the reward functions $\left\{ \RI l\right\} $
of the original factored Dec-POMDP into SPs $\co\in\mathcal{C}$,
and to compute an IO upper bound $\hat{\V}_{\co}^{IO}$ for each (which
can be any of the three IO-UBs proposed in Section \ref{sec:local-upper-bounds}).
Our \emph{global influence-optimistic upper bound }is then given by:
\begin{equation}
\hat{\V}^{IO}\defas\sum_{\co\in\mathcal{C}}\hat{\V}_{\co}^{IO}.\label{eq:global_V^IO}
\end{equation}

\fig\ref{fig:global-upper-bound-construction} illustrates the construction
of a global upper bound~$\hat{\V}$ for the 6-agent \<FFG>, showing
the original problem (top row) and two possible decompositions in
SPs. The second row specifies a decomposition into two SPs, while
the third row uses three SPs. The illustration clearly shows how (in
this problem) a decomposition eliminates certain agents completely
and replaces them with optimistic assumptions: E.g., in the second
row, during the computation of $\hat{\V}_{\co}^{IO}$ for both SPs
($\co=1,2$) the assumption is made that agent 3 will always fight
fire in the SP under concern. Effectively we assume that agent~3
fights fire at both house 3 and house 4 simultaneously (and hence
is represented by a superhero figure). \fig\ref{fig:global-upper-bound-construction}
also illustrates that, due to the line structure of \<FFG>, there
are two \emph{types }of SPs: `internal' SPs which make optimistic
assumptions on two sides, and `edge' SPs that are optimistic at
just one side.

\begin{figure}
\begin{centering}
\textbf{\includegraphics[width=1\columnwidth]{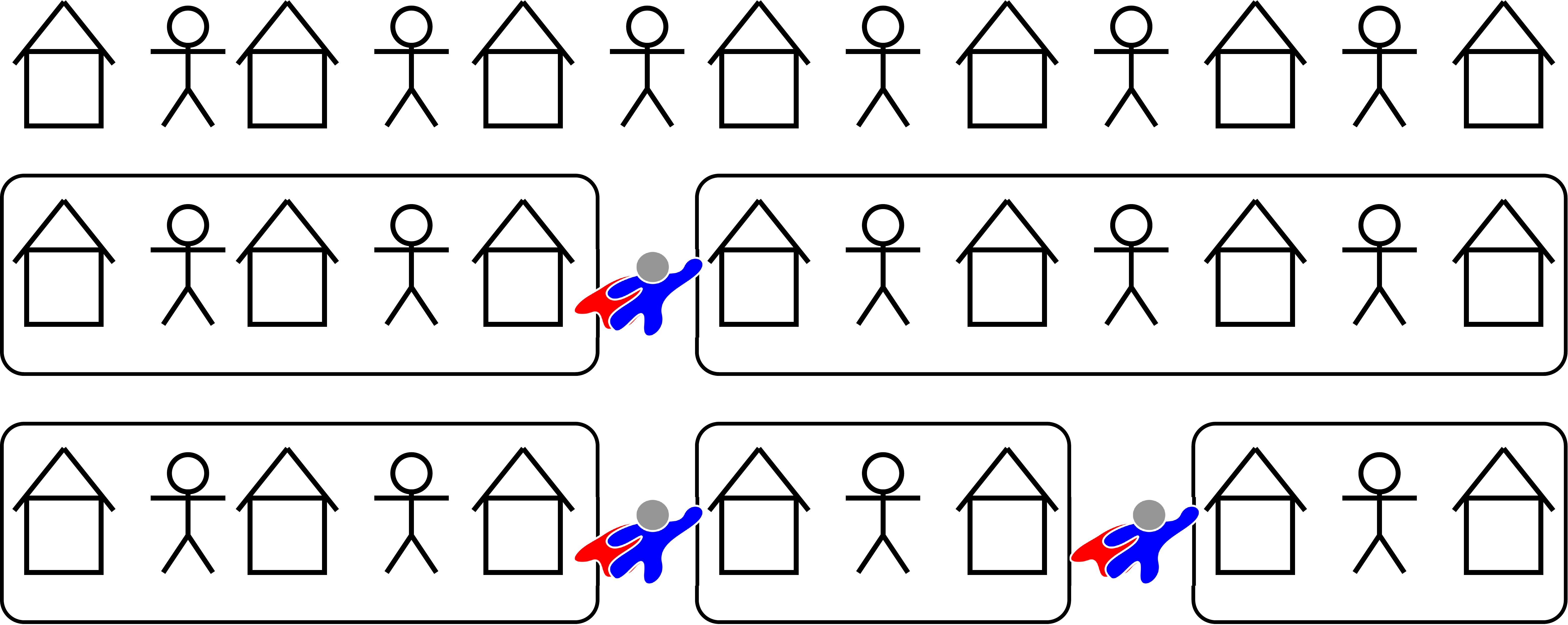}}
\par\end{centering}

\protect\caption{Illustration of construction of a global upper bound for $6$-agent
\<FFG> using influence-optimistic upper bounds for sub-problems.}

\label{fig:global-upper-bound-construction}
\end{figure}

Finally, we formally prove the correctness of our proposed upper bounding
scheme.

\begin{theorem}\label{thm:global-UB}Let $\mathcal{C}$ be a partitioning
of the reward function set $\RS$ into sub-problems such that every
$\RI l$ is represented in one SP $\co$, then the global IO-UB is
in fact an upper bound to the optimal value $\hat{\V}^{IO}\geq V(\jpol^{*})$.\end{theorem}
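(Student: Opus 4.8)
The plan is to leverage the additive structure of the reward together with the partitioning hypothesis to reduce the global claim to the per-sub-problem bounds already established in \thm\ref{thm:IO-Q-MMDP-is-UB}, \cor\ref{cor:IO-Q-MPOMDP-is-UB} and \cor\ref{cor:IO-Q-Dec-POMDP-is-UB}. First I would note that because $\mathcal{C}$ partitions $\RS$, each local reward function $\RI l$ is assigned to exactly one sub-problem, so the global immediate reward splits with no term counted twice, $\R(\s,\ja,\s')=\sum_{\co\in\mathcal{C}}\RC{\co}(\s,\ja,\s')$. Taking the expectation along trajectories generated by $\jpol^{*}$ from $\bO$ and using linearity, this lifts to an exact additive decomposition of the optimal value,
\[
\V(\jpol^{*})=\sum_{\co\in\mathcal{C}}\argsC{\V}{\co}(\jpol^{*}).
\]

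Next I would bound each summand. The excerpt already observes that $\argsC{\V}{\co}(\jpol^{*})\leq V_{\co}^{LO}$, since $\jpol^{*}$ maximizes the sum of \emph{all} reward components and may therefore be locally suboptimal for the single component $\RC{\co}$. Depending on which scheme produced $\hat{\V}_{\co}^{IO}$, I would then invoke the matching result among \thm\ref{thm:IO-Q-MMDP-is-UB}, \cor\ref{cor:IO-Q-MPOMDP-is-UB} and \cor\ref{cor:IO-Q-Dec-POMDP-is-UB} to obtain $V_{\co}^{LO}\leq\hat{\V}_{\co}^{IO}$, hence $\argsC{\V}{\co}(\jpol^{*})\leq\hat{\V}_{\co}^{IO}$ for every $\co$. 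Summing over $\co\in\mathcal{C}$ and combining the decomposition above with the definition \eqref{eq:global_V^IO} closes the argument,
\[
\V(\jpol^{*})=\sum_{\co\in\mathcal{C}}\argsC{\V}{\co}(\jpol^{*})\leq\sum_{\co\in\mathcal{C}}\hat{\V}_{\co}^{IO}=\hat{\V}^{IO}.
\]

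Given the machinery already in place, there is no deep technical obstacle; the entire weight of the argument rests on the partitioning assumption used in the first step. The point to be careful about is that $\mathcal{C}$ must be a \emph{genuine} partition: if sub-problems were allowed to share reward components, the sum $\sum_{\co}\argsC{\V}{\co}(\jpol^{*})$ would over-count the shared rewards and the clean equality $\V(\jpol^{*})=\sum_{\co}\argsC{\V}{\co}(\jpol^{*})$ would break, forcing a separate sign analysis of the overlapping terms. The non-overlapping assumption is precisely what guarantees that each reward is attributed to exactly one local optimistic bound, so that summing those bounds neither double-counts nor drops any reward.
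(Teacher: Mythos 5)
Your proposal is correct and is essentially the paper's own argument: the paper chains $\hat{\V}^{IO}\geq\sum_{\co}V_{\co}^{LO}=\sum_{\co}\max_{\jpol}\argsC{\V}{\co}(\jpol)\geq\max_{\jpol}\sum_{\co}\argsC{\V}{\co}(\jpol)=V(\jpol^{*})$, which is your chain read in reverse, with your ``evaluate each local value at $\jpol^{*}$ and bound by its maximum'' step being exactly the paper's sum-of-maxima versus maximum-of-sum inequality. Your emphasis on the partition hypothesis preventing double-counting matches the paper's use of that hypothesis in the final equality.
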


\begin{proof}Starting from the definition \eqref{eq:global_V^IO},
we have
\begin{eqnarray*}
\hat{\V}^{IO} & \defas & \sum_{\co\in\mathcal{C}}\hat{\V}_{\co}^{IO}\overset{\mathrm{\{\sect\ref{sec:local-upper-bounds}\}}}{\geq}\sum_{\co\in\mathcal{C}}V_{\co}^{LO}\\
 & \defas & \sum_{\co\in\mathcal{C}}\max_{\jpolG{\excl{\co}}}V_{\co}^{BR}(\jpolG{\excl{\co}})\\
 & \defas & \sum_{\co\in\mathcal{C}}\max_{\jpolG{\excl{\co}}}\max_{\jpolG{\co}}\argsC{\V}{\co}(\jpolG{\co},\jpolG{\excl{\co}})\\
 & \geq & \max_{\jpol}\sum_{\co\in\mathcal{C}}\argsC{\V}{\co}(\jpolG{\co},\jpolG{\excl{\co}})\overset{\text{\{\ensuremath{\mathcal{C}}is a partition\}}}{=}V(\jpol^{*})
\end{eqnarray*}
thus proving the result. \end{proof}

\section{Empirical Evaluation }

\label{sec:evaluation}

In order to test the potential impact of the proposed influence-optimistic
upper bounds, we present numerical results in the context of a number
of benchmark problems. In this evaluation, we focus on the (relative)
values found by these heuristics, as we hope that these will spark
a number of interesting ideas for further research (such as the notion
of `influence strength', its relation to approximability of factored
Dec-POMDPs, and the key idea that reasonable bounds for very large
problems may be possible). \textbf{}We do not thoroughly investigate
timing results as the analysis of \sect\ref{sec:complexity-analysis}
indicates that relative timing results follow those of regular (non-IO)
MMDP, MPOMDP and Dec-POMDP methods; see, e.g.,~\cite{Oliehoek08JAIR}
for a comparison of such timing results.\textbf{ }However, in order
to provide a overall idea of the run times, we do provide some indicative
running times.%
\footnote{All experiments are run on a Intel Xeon E5-2650L, 32GB system making
use of one core only.%
}

\begin{figure*}
\begin{centering}
\providecommand{\trh}{2.6cm}
\includegraphics[height=\trh]{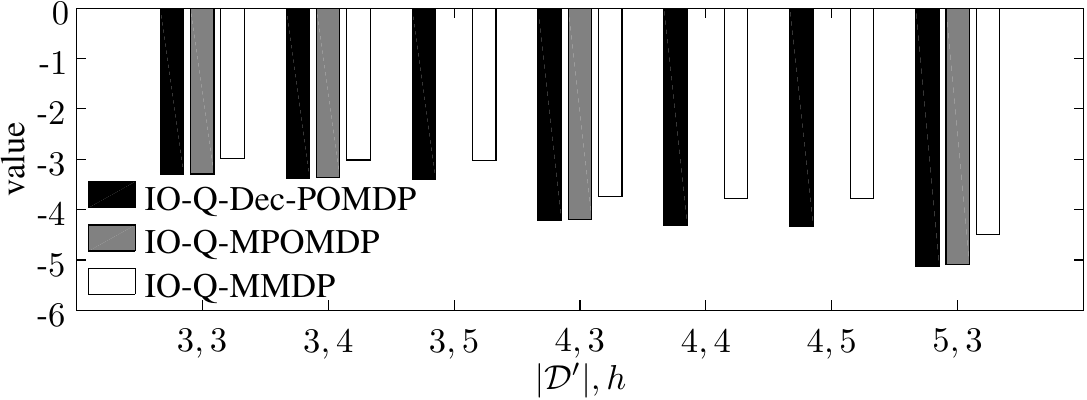}
\includegraphics[height=\trh]{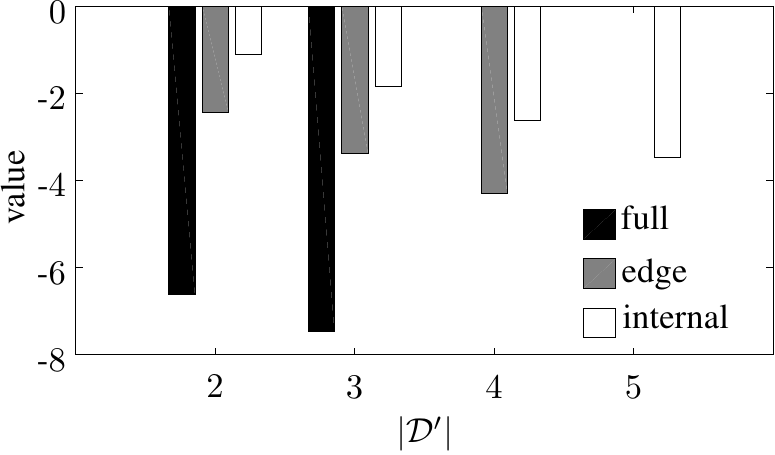}
\includegraphics[height=\trh]{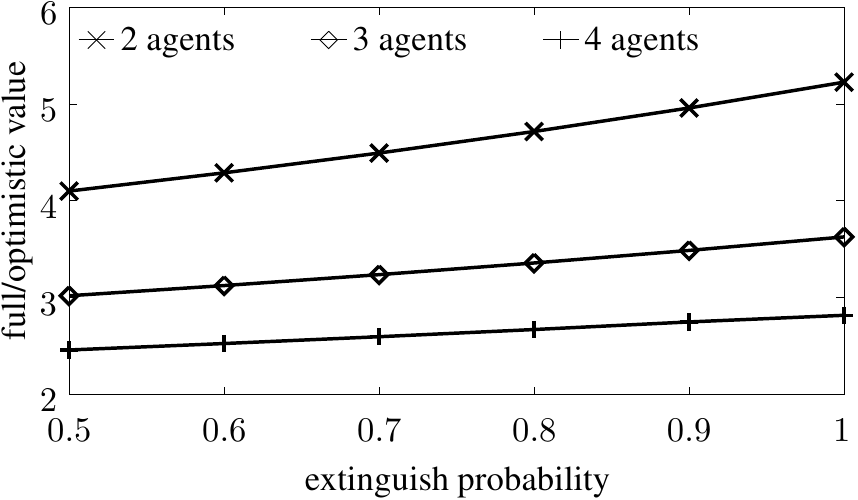}\\[-.5em]
\par\end{centering}

\protect\caption{Left: Different IO-UBs on `edge' \<FFG> problems. Middle: IO-Q-Dec-POMDP
upper bound on different types (internal, edge, full) of SPs. Right:
The impact the `influence strength'.}

\label{fig:result-row}

\label{fig:IO-heuristics-compared}\label{fig:SP-types}\label{fig:influence-strength}
\end{figure*}

\subsection{Comparison of Different Bounds}

The bounds that we propose are ordered in tightness, $\argsC{\hat{\V}}{\co}^{D}\leq\argsC{\hat{\V}}{\co}^{P}\leq\argsC{\hat{\V}}{\co}^{M}$,
similar to how regular (non-IO) Dec-POMDP, Q-MPOMDP, and Q-MMDP values
relate~\cite{Oliehoek08JAIR}. To get an understanding of how these
differences turn out in practice, \fig\ref{fig:IO-heuristics-compared}(left)
compares the different upper bounds introduced. 

Although the approach described in the paper is general, in the numerical
evaluation here we exploit the property that the optimistic influences
are easily identified off-line, which allows for the construction
of small `optimistic Dec-POMDPs' (respectively MPOMDPs or MMDPs)
without sacrificing in bound quality. E.g., in order to compute the
local IQ-Q-Dec-POMDP upper bound for a 3-house \<FFG> `edge' SP,
we define a regular 3-house Dec-POMDP where the transitions probabilities
for the first house (say $\sfacI i$ in \fig\ref{fig:sub-problem})
are modified to account for the optimistic assumption that another
(superhero) agent fights fire there and that its neighbor is not burning
(i.e., $\aA{i-1}=right$ and $\sfacI{i-1}=not\_burning$ in \fig\ref{fig:sub-problem}).
The values $\argsC{\hat{\V}}{\co}^{D}$ (resp. $\argsC{\hat{\V}}{\co}^{P},\argsC{\hat{\V}}{\co}^{M}$)
are computed by running a state-of-the-art Dec-POMDP solver~\cite{Oliehoek13JAIR}
(resp.\ incremental pruning~\cite{Cassandra97UAI}, plain dynamic
programming) on such optimistically defined problems. 

\fig\ref{fig:IO-heuristics-compared}(left) shows the values for
such `edge' problems. Missing bars indicate time-outs~(>4h). As
an indication of run time, the $\nrASP=3,\hor=5$ problem took $2.21s$
for IO-Q-MMDP, and $995.28s$ for IO-Q-Dec-POMDP. The shown values
indicate that $\argsC{\hat{\V}}{\co}^{D},\argsC{\hat{\V}}{\co}^{P}$
can be tighter than $\argsC{\hat{\V}}{\co}^{M}$ in practice. In most
cases, the difference between IO-Q-MPOMDP and IO-Q-Dec-POMDP is small,
but these could become larger for longer horizons~\cite{Oliehoek08JAIR}.
We performed the same analysis for the \<Aloha> benchmark~\cite{Oliehoek13AAMAS},
and found very similar results.

We also compare the bounds found on the different types of SPs (internal
and edge-cases, see \fig\ref{fig:global-upper-bound-construction})
encountered in \<FFG> ($\hor=4$). In addition, \fig\ref{fig:SP-types}(middle)
also includes---if computable within the allowed time---values of
SPs that are `full' problems (i.e., the regular optimal Dec-POMDP
value for the full \<FFG> instance with the indicated number of agents.)
This makes clear that the optimistic assumption has quite some effect:
being optimistic at one edge more than halves the optimal cost, and
the IO assumption at both edges of the SP leads to another significant
reduction of that cost. This is to be expected: the optimistic problems
assume that there \emph{always }will be another agent fighting fire
at the house at an optimistic edge, while the full problem \emph{never}
has another agent at that same house. When also taking into account
the transition probabilities---two agents at a house will completely
extinguish a fire---it is clear that the IO assumption should have
a high impact on the local value.

\subsection{The Effect of Influence Strength}

\fig\ref{fig:SP-types}(middle) makes clear that the IO assumption
in \<FFG> is quite strong and leads to a significant over-estimation
of the local value compared to the `full' problem. We say that \<FFG> 
has a high \emph{influence strength. }In fact, this hints at a new
dimension of the qualification of weak coupling~\cite{Witwicki11AAMAS}
that takes into account the variance in NLAF probabilities (as a function of the change of value of the influence source) 
and their impact on the local value.

As a preliminary investigation of this concept, we devise a modification
of \<FFG> where the influence strength can be controlled. In particular,
we parameterize the probability that a fire is extinguished completely
when 2 agents visit the same house, which is set to 1 in the original
problem definition. Lower values of this probability mean that optimistically
assuming there is another agent at a house will lead to less advantage,
and thus lower influence strength.

\fig\ref{fig:influence-strength}(right) shows the results of this
experiment. It shows that there is a clear relation between the fire-extinguish
probability when two agents fight fire at a house, and the ratio between
the `regular' (Dec-POMDP) value and optimistic value. It also shows
that SPs with more agents are less affected: this makes sense since
optimistic assumptions account for a smaller fraction of the achievable
value. In other words, larger sub-problems give a tighter approximation.

\subsection{Bounding Heuristic Methods}

Here we investigate the ability to provide informative \emph{global
}upper bounds. While the previous analysis shows that the overestimation
is quite significant at the \emph{true }edges of the problem (where
no agents exist), this is not necessarily informative of the overestimation
at internal edges in decompositions of larger problems (where other
agents do exist, even if not superheros). As such, besides investigating
the upper bounding capability, the analysis here also provides a better
understanding of such internal overestimations.

We use the tightest upper bound we could find by considering different
SP partitions, with sizes ranging from $\nrA=2$--$5$, and investigate
the guarantees that it can provide for transfer planning (TP)~\cite{Oliehoek13AAMAS},
which is one of the methods capable of providing solutions for large
factored Dec-POMDPs. Since the method is a heuristic method that does
not provide the exact value of the reported joint policy, the value
of TP, $V^{\mathit{TP}}$, is determined using $10.000$ simulations
of the found joint policy leading to accurate estimates.%
\footnote{Note that there is no method for Dec-POMDP policy evaluation that
runs in polynomial time. In fact, existence of such a method would
reduce the complexity of solving a Dec-POMDP to NP, an impossibility
since the time hierarchy theorem implies that NP$\neq$ NEXP.%
} To put the results into context, we also show the value of a random
policy. Finally, we show (second y-axis in Fig. \ref{fig:TP-UB-many-agents})
what we call the \emph{empirical approximation factor (EAF)}:
\[
EAF=\max\{\frac{\hat{V}^{\mathit{IO}}}{V^{\mathit{TP}}},\frac{V^{\mathit{TP}}}{\hat{V}^{\mathit{IO}}}\}.
\]
This is a number comparable to the approximation factors of \emph{approximation
algorithms~}\cite{Vazirani01book}.%
\footnote{`Empirical' emphasizes the lack of \emph{a priori }guarantees.%
}

\begin{figure}
\begin{centering}
\includegraphics[width=1\columnwidth]{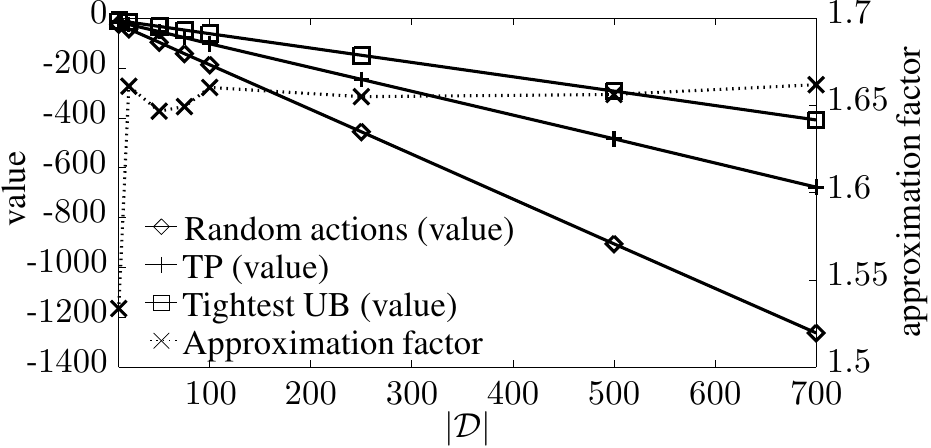}\\[-0.5em]
\par\end{centering}

\protect\caption{The global IO-Q-Dec-POMDP upper bound on large \<FFG> instances.}

\label{fig:TP-UB-many-agents}
\end{figure}

We computed upper bounds for large, horizon $\hor=4$, \<FFG> instances.
The computation of the local upper bounds for the largest SPs used
(i.e., $\nrASP=5$) took 3.31 secs for IO-Q-MMDP and 2696.23s for
IO-Q-Dec-POMDP. \fig\ref{fig:TP-UB-many-agents} shows the results
that indicate that the upper bound is relatively tight: the solutions
found by TP are not too far from the upper bound. In particular, the
EAF lies typically between 1.4 and 1.7, thus providing firm guarantees
for solutions of factored Dec-POMDPs with up to 700 agents. Moreover,
we see that the EAF stays roughly constant for the larger problem
instances indicating that \emph{relative }guarantees do not degrade
as the number of agents increase. Of course, the question of whether
the optimal value lies closer to the blue (UB) or orange (TP) line
remains open; only further research on improved (heuristic) solution
methods and tighter upper bounds can answer that question. However,
we have gone from a situation where the only upper bound we had was
`predict $R_{max}$ for every stage' (which corresponds to the value
0 and EAF=$\infty$) to a situation where we have a much more informative
bound. 

Results obtained for a similar approach for \<Aloha> using SPs containing
up to $\nrASP=6$ agents are shown in \tab\ref{tab:Aloha-TP}. The
numbers clearly illustrate that it is possible to provide very strong
guarantees for problems up to $\nrA=250$ agents (beyond which memory
forms the bottleneck for TP); the solution for the $\nrA=50$ instance
is essentially optimal, indicating also a very tight bound for this
problem.

\begin{table}
\begin{centering}
{\small%
\begin{tabular}{ccccc}
\toprule 
$\nrA$ & 50 & 75 & 100 & 250\tabularnewline
\midrule
$V^{TP}$ & $-71.99$ & $-111.07$ & $-148.70$ & $-382.47$\tabularnewline
$\hat{V}^{IO}$ & $-72.00$ & $-107.06$ & $-144.00$ & $-360.00$\tabularnewline
EAF & $1.00$ & $1.04$ & $1.03$ & $1.06$\tabularnewline
\bottomrule
\end{tabular}}
\par\end{centering}

\protect\caption{Empirical approximation factors for \<Aloha> ($\hor=3$) with varying
number of agents.}

\label{tab:Aloha-TP}

\end{table}

\subsection{Improved Heuristic Influence Search}

Aside from analyzing the solution quality of approximate methods,
our bounds can also be used in optimal methods. In particular, A{*}-OIS~\cite{Witwicki12AAMAS}
solves TD-POMDPs (a sub-class of factored Dec-POMDPs) by decomposing
them into 1-agent SPs, searching through the space of influences,
and pruning using optimistic heuristics. However, existing A{*}-OIS
heuristics treat the unspecified-influence stages of the  SPs as
fully-observable. In contrast, IO-Q-MPOMDP models the \emph{partial
observability} of the SPs.

We now present results that suggest an added computational benefit
to treating partial observability in the A{*}-OIS heuristics. Table
\ref{tab:astar-heuristics-comparison} illustrates the differences
in pruning afforded by four different A{*}-OIS heuristics: \emph{M}
is the baseline MDP-based heuristic from \cite{Witwicki12AAMAS},
\emph{P} is shorthand for IO-Q-MPOMDP, and \emph{M'} and \emph{P'}
are variations that improve tightness with locally-derived probability
bounds on the optimistic influences. We report node counts and runtime
across several problem instances from the HouseSearch domain~\cite{Witwicki12AAMAS}.

As shown, the POMDP-based heuristics tends to allow for more pruning
(fewer expanded nodes) and thereby reduced computation in comparison
to their MDP-based counterparts. Contrasting this reduction across
two variants of \emph{Diamond} HouseSearch suggests that the IO-Q-MPOMDP
heuristics gain more advantage when observability is more restricted.
However, this advantage is sometimes outweighed by the increased computational
overhead of a more complex heuristic calculation (such as in \emph{Squares}
HouseSearch $\hor=4$). The question of predicting in advance when
this will be the case is interesting, but difficult (and orthogonal
to our contribution).

\begin{table}[tb]
\begin{centering}
{\scriptsize
\renewcommand{\tabcolsep}{0mm} 
\renewcommand{\arraystretch}{1}\global\long\def\hs{\hspace{4mm}}
 \global\long\def\hss{\hspace{2mm}}
\begin{tabular}{@{}r@{\hs}l@{\hss}l@{}c@{\hs}l@{\hss}l@{}c@{\hs}l@{\hss}l@{}c@{\hs}l@{\hss}l@{}}
\toprule 
 & \multicolumn{2}{c}{$\hor$=2} &  & \multicolumn{2}{c}{$\hor$=3} &  & \multicolumn{2}{c}{$\hor$=4} &  & \multicolumn{2}{c}{$\hor$=5}\tabularnewline
\midrule 
\multicolumn{12}{c}{\textsc{HouseSearch} : \textit{Diamond} : $Pr(\text{correct obs.})=0.75$ }\tabularnewline
\midrule 
 & \#  & $s$  &  & \#  & $s$  &  & \#  & $s$  &  & \#  & $s$ \tabularnewline
M & \emph{55 } & 2.36  &  & \emph{245 } & 33.86  &  & \emph{2072 } & 1179  &  &  & \tabularnewline
P & \emph{55 } & 2.42  &  & \emph{241 } & 34.21  &  & \emph{1744 } & 1010  &  &  & \tabularnewline
M' & \emph{45 } & 2.00  &  & \emph{91 } & 9.00  &  & \emph{183 } & 64.32  &  & \emph{97 } & 991.2 \tabularnewline
P'  & \emph{45 } & 2.02  &  & \emph{91 } & 10.05  &  & \emph{119 } & 39.84  &  & \emph{66 } & 607.9 \tabularnewline
\midrule 
\multicolumn{12}{c}{\textsc{HouseSearch} : \textit{Diamond} : $Pr(\text{correct obs.})=0.5$ }\tabularnewline
\midrule 
 & \#  & $s$  &  & \#  & $s$  &  & \#  & $s$  &  & \#  & $s$ \tabularnewline
M & \emph{47 } & 3.35  &  & \emph{225 } & 51.80  &  & \emph{1434 } & 1935  &  &  & \tabularnewline
P & \emph{47 } & 3.32  &  & \emph{223 } & 52.75  &  & \emph{1416 } & 1905  &  &  & \tabularnewline
M' & \emph{23 } & 1.96  &  & \emph{24 } & 4.69  &  & \emph{41 } & 38.42  &  & \emph{148 } & 1882 \tabularnewline
P'  & \emph{11 } & 0.89  &  & \emph{15 } & 3.90  &  & \emph{19 } & 25.32  &  & \emph{23 } & 256.8 \tabularnewline
\midrule 
\multicolumn{12}{c}{\textsc{HouseSearch} : \textit{Squares} : $Pr(\text{correct obs.})=0.75$ }\tabularnewline
\midrule 
 & \#  & $s$  &  & \#  & $s$  &  & \#  & $s$  &  &  & \tabularnewline
M & \emph{25 } & 2.41  &  & \emph{311 } & 120.5  &  & \emph{1323 } & 5846  &  &  & \tabularnewline
P & \emph{25 } & 2.56  &  & \emph{311 } & 121.3  &  & \emph{1032 } & 7184  &  &  & \tabularnewline
M' & \emph{25 } & 2.61  &  & \emph{277 } & 109.6  &  & \emph{816 } & 3924  &  &  & \tabularnewline
P'  & \emph{25 } & 2.53  &  & \emph{240 } & 95.28  &  & \emph{752 } & 6760  &  &  & \tabularnewline
\bottomrule
\end{tabular}}
\par\end{centering}

\noindent \raggedright{}\protect\caption{Node count and run time for A{*}-OIS.}
\label{tab:astar-heuristics-comparison} 
\end{table}

\section{Related Work}

\label{sec:related-work}

Here we provide an overview of related approaches.

\paragraph{Scalable Heuristic Methods}

In recent years, many scalable approaches without guarantees have
been developed for Dec-POMDPs and related models \cite{Kumar09AAMAS,Wu10AAAI,Yin11IJCAI,Kumar11IJCAI,Velagapudi11AAMAS,Varakantham12AAAI,Oliehoek13AAMAS,Wu13IJCAI,Varakantham14AAAI}.
The upper bounding mechanism that we propose could be useful for benchmarking
many of these methods.

\paragraph{Sub-Problems vs. Source Problems}

The use of sub-problems (SPs) is conceptually similar to the use of
\emph{source problems }in transfer planning (TP) \cite{Oliehoek13AAMAS}.
Differences are that, in our partitioning-based upper bounding scheme,
SPs are selected such that they do not contain overlapping rewards,
while TP allows for overlapping source problems. Moreover TP does
not consider optimistic influences but implicitly assumes arbitrary
influences. Finally, TP is used as a way to compute a heuristic for
the original problem; our approach here simply returns a scalar value
(although extensions to use IO heuristics for heuristic search for
factored Dec-POMDPs are an interesting direction for future research).

\paragraph{Optimism with Respect to Influences}

Being optimistic with respect to external influences has been considered
before. For instance, Kumar \& Zilberstein \cite{Kumar09AAMAS} make
optimistic assumptions on transitions in an ND-POMDP to derive an
MMDP-based policy which is used to sample belief points for memory-bounded
dynamic programming. The approach does not use these assumptions to
upper bound the global value and the formulation is specific to ND-POMDPs.
As described in the experiments, Witwicki, Oliehoek \& Kaelbling \cite{Witwicki12AAMAS}
use a local upper bound in order to perform heuristic influence search
for TD-POMDPs. IO-Q-MMDP can be seen as a generalization of that heuristic
to both factored Dec-POMDPs and multiagent sub-problems, while our
other heuristics additionally deal with partial observability.

\paragraph{Quality Guarantees for Large Dec-POMDPs}

As mentioned in the introduction, a few approaches to computing upper
bound for large-scale MASs and employing them in heuristic search
methods have been proposed. In particular, there are some scalable
approaches with guarantees for the case of transition and observation
independence as encountered in TOI-MDPs and ND-POMDPs \cite{Becker03AAMAS,Becker04AAMAS,Varakantham07AAMAS,Marecki08AAMAS,Dibangoye13AAMAS,Dibangoye14AAMAS}.
The approach by Witwicki for TD-POMDPs \cite[Sect. 6.6]{Witwicki11PhD}
is a bit more general in that it allows some forms of transition dependence,
as long as the interactions are \emph{directed} (one agent can affect
another) and no two agents can affect the same factor in the same
stage. In addition, scalability relies on each agent having only a
handful of `interaction ancestors'. 

However, these previous approaches rely on the true value function
being factored as the sum of a set \emph{$\set{E}$} of \emph{local
}value\emph{ }components: 
\[
\V(\jpol)=\sum_{e\in\set{E}}\argsC{\V}e(\jpolG e),
\]
where $\jpolG e$ is the \emph{local} joint policy of the agents that
participate in component $e$. (This is also referred to as the `value
factorization' framework~\cite{Kumar11IJCAI}). For this setting,
an upper bound is easily constructed as the sum of local upper bounds:
\[
\hat{\V}(\jpol)=\sum_{e\in\set{E}}\hat{\argsC{\V}e}(\jpolG e).
\]
While this resembles our IO upper bound~\eqref{eq:global_V^IO},
the crucial distinction is that for value factorized settings computing
$\hat{\argsC{\V}e}$ does not require any influence-optimism: the
reason that value-factorization holds is precisely \emph{because there
are no influence} \emph{sources }for the components. As such, our
influence-optimistic upper bounds can be seen as a strict generalization
of the upper bounds that have been employed for settings with factored
value functions. Investigating if such methods, such as the method
by Dibangoye et al\@.~\cite{Dibangoye14AAMAS} can be modified to
use our IO-UBs is an interesting direction of future work.\textbf{}

Finally, the event-detecting multiagent MDP \cite{Kumar09IJCAI} provides
quality guarantees for a specific class of sensor network problems
by using the theory of submodular function maximization. It is the
only previous method with quality guarantees that delivers scalability
with respect to the number of agents without assuming that the value
function of the problem is additively factored into small components.

\section{Conclusions}

\label{sec:Conclusions} We presented a family of \emph{influence-optimistic}
upper bounds for the value of sub-problems of factored Dec-POMDPs,
together with a partition-based decomposition approach that enables
the computation of global upper bounds for very large problems. The
approach builds upon the framework of influence-based abstraction~\cite{Oliehoek12AAAI_IBA},
but---in contrast to that work---makes optimistic assumptions on the
incoming `influences', which makes the sub-problems easier to solve.
An empirical evaluation compares the proposed upper bounds and demonstrates
that it is possible to achieve guarantees for problems with hundreds
of agents, showing that found heuristic solution are in fact close
to optimal (empirical approximation factors of $<1.7$ in all cases
and sometimes substantially better). This is a significant contribution,
given the complexity of computing $\epsilon$-approximate solutions
and the fact that tight global upper bounds are of crucial importance
to interpret the quality of heuristic solutions.

Intuitively, the proposed approach is expected to work well in settings
where the Dec-POMDP is `weakly' coupled. The work by Witwicki \&
Durfee~\cite{Witwicki11AAMAS} identifies three dimensions that can
be used to quantify the notion of weak coupling. Our experiments suggest
the existence of a new dimension that can be thought of as \emph{influence
strength}. This dimension captures the impact of non-local behavior
on local \emph{values }and thus directly relates to how well a problem
can be approximated using localized components.

In this paper we focused on the finite-horizon case, but the principle
of influence optimism underlying the upper-bounding approach can also
be applied in infinite-horizon settings. Furthermore, the approach
can be modified to compute `pessimistic' influence (i.e., lower)
bounds, which could be useful in competitive settings, or for risk-sen\-si\-tive
planning \cite{marecki10AAMAS}. It is also immediately applicable
to problems involving `unpredictable' dynamics \cite{Witwicki13ICAPS,delgado11AIJ}.
Finally, our upper-bounding method contributes a useful precursor
for techniques that automatically search the space of possible upper
bounds decompositions, efficient optimal influence-space heuristic
search methods (for which we provided preliminary evidence in this
paper), and A{*} methods for a large class of factored Dec-POMDPs.
In particular, a promising idea is to employ our factored upper bounds
in combination with the heuristic search methods by Dibangoye et al\@.~\cite{Dibangoye14AAMAS}.
While it is not possible to directly use that method since it additionally
requires a factored lower bound function, pessimistic-influence bounds
could provide those.

A limitation of the current approach is that sub-prob\-lems still
need to be relatively small, since we rely on \emph{optimal }optimistic
solution of the sub-problems. Developing more scalable `optimistic
solution methods' thus is an important direction of future work.
Experiments with influence search indicate that using probabilistic
bounds on the positive influences has a major impact~\cite{Witwicki12AAMAS}.
As such, another important direction of future work is investigating
if it is possible to develop tighter upper bounds by making more realistic
optimistic assumptions.
\paragraph{Acknowledgments}

\noindent F.O. is supported by NWO Innovational Research Incentives
Scheme Veni \#639.021.336.

\bibliographystyle{abbrv}
\bibliography{AAMAS15-UBs}

\onecolumn

\appendix

\section{Proofs}

\label{sec:app:proofs}

\lem\ref{lem:maxD_v__leq__vIO}. Let $\jpolG{\co}^{t:h-1}$ be a
$(h-\ts)$-steps-to-go policy. Let $\nu\in\mathcal{V}$ and $\nu_{IO}\in\mathcal{V}_{IO}$
be the vectors induced by $\jpolG{\co}^{t:h-1}$ under regular MPOMDP
back-projections (for some $\ifpiC{\co}$), and under IO back-projections.
Then 
\[
\forall_{\mfCT{\co}{\ts}}\qquad\max_{\dsetCT{\co}{\ts+1}}\nu(\langle\mfCT{\co}{\ts},\dsetCT{\co}{\ts+1}\rangle)\leq\nu_{IO}(\mfCT{\co}{\ts}).
\]

\begin{proof}

The proof is via induction. The base case is for the last stage $t=h-1$,
in which case the vectors only consist of the immediate reward:
\[
\max_{\dsetCT{\co}{\ts+1}}\nu(\langle\mfCT{\co}{\ts},\dsetCT{\co}{\ts+1}\rangle)=\max_{\dsetCT{\co}{\ts+1}}R_{\ja^{\ts}}(\mfCT{\co}{\ts})=R_{\ja^{\ts}}(\mfCT{\co}{\ts})=\nu_{IO}(\mfCT{\co}{\ts}),
\]
where $\ja^{\ts}$ is the joint action specified by $\jpolG{\co}^{h-1:h-1}$,
thus proving the base case. The induction step follows.

\textbf{Induction Hypothesis: }Suppose that, given that $\nu\in\mathcal{V}$
and $\nu_{IO}\in\mathcal{V}_{IO}$ are vectors for the same policy
$\jpolG{\co}^{t+1:h-1}$, for all $\mfCT{\co}{\ts+1}$
\begin{equation}
\max_{\dsetCT{\co}{\ts+2}}\nu(\langle\mfCT{\co}{\ts+1},\dsetCT{\co}{\ts+2}\rangle)\leq\nu_{IO}(\mfCT{\co}{\ts+1})\label{eq:IS:IH}
\end{equation}
holds.

\textbf{To prove: }Given that $\nu\in\mathcal{V}$ and $\nu_{IO}\in\mathcal{V}_{IO}$
are vectors for the same policy $\jpolG{\co}^{t:h-1}$, for all $\mfCT{\co}{\ts}$

\begin{equation}
\max_{\dsetCT{\co}{\ts+1}}\nu(\langle\mfCT{\co}{\ts},\dsetCT{\co}{\ts+1}\rangle)\leq\nu_{IO}(\mfCT{\co}{\ts}).\label{eq:IS:ToProof}
\end{equation}

\textbf{Proof:}

We first define the vectors from the l.h.s. Let $\jaT{\ts}$ denote
the first joint action specified by~$\jpolG{\co}^{t:h-1}$. Then
we can write
\begin{equation}
\nu(\langle\mfCT{\co}{\ts},\dsetCT{\co}{\ts+1}\rangle)=r_{a}(\mfCT{\co}{\ts})+\gamma\sum_{\joT{\ts+1}}\nu^{\pi|ao}(\langle\mfCT{\co}{\ts},\dsetCT{\co}{\ts+1}\rangle)\label{eq:IS:vec-if__1}
\end{equation}
where $\nu^{\pi|ao}$ is the back-projection of the vector $\Gamma(\jpolG{\co}^{t:h-1},\jaGT{\co}{\ts},\joGT{\co}{\ts+1})$
that corresponds to $\jpolG{\co}^{t+1:h-1}=\jpolG{\co}^{t:h-1}\Downarrow_{a,o}$
(the sub-tree of $\jpolG{\co}^{t:h-1}$ given $\jaGT{\co}{\ts},\joGT{\co}{\ts+1}$).
That is, by filling out the definition of back-projection, we get
\[
\nu(\langle\mfCT{\co}{\ts},\dsetCT{\co}{\ts+1}\rangle)=r_{a}(\mfCT{\co}{\ts})+\gamma\sum_{\joT{\ts+1}}\left[\sum_{\mfCT{\co}{\ts+1}}\bar{O}(\joT{\ts+1}|\jaT{\ts},\mfCT{\co}{\ts+1})\bar{T}_{\ifpiC{\co}}(\mfCT{\co}{\ts+1}|\langle\mfCT{\co}{\ts},\dsetCT{\co}{\ts+1}\rangle,\jaGT{\co}{\ts})[\Gamma(\jpolG{\co}^{t:h-1},\jaGT{\co}{\ts},\joGT{\co}{\ts+1})](\langle\mfCT{\co}{\ts+1},\dsetCT{\co}{\ts+2}\rangle)\right],
\]
where $\dsetCT{\co}{\ts+2}$ is specified as a function of $\mfCT{\co}{\ts},\dsetCT{\co}{\ts+1},\jaT{\ts},\mfCT{\co}{\ts+1}$.
Clearly, introducing a maximization can not decrease the value, so 

\begin{eqnarray}
\nu(\langle\mfCT{\co}{\ts},\dsetCT{\co}{\ts+1}\rangle) & \leq & r_{a}(\mfCT{\co}{\ts})+\gamma\sum_{\joT{\ts+1}}\sum_{\mfCT{\co}{\ts+1}}\bar{O}(\joT{\ts+1}|\jaT{\ts},\mfCT{\co}{\ts+1})\bar{T}_{\ifpiC{\co}}(\mfCT{\co}{\ts+1}|\langle\mfCT{\co}{\ts},\dsetCT{\co}{\ts+1}\rangle,\jaGT{\co}{\ts})\max_{\dsetCT{\co}{\ts+2}}[\Gamma(\jpolG{\co}^{t:h-1},\jaGT{\co}{\ts},\joGT{\co}{\ts+1})](\langle\mfCT{\co}{\ts+1},\dsetCT{\co}{\ts+2}\rangle)\nonumber \\
\text{\{I.H.\}} & \leq & r_{a}(\mfCT{\co}{\ts})+\gamma\sum_{\joT{\ts+1}}\sum_{\mfCT{\co}{\ts+1}}\bar{O}(\joT{\ts+1}|\jaT{\ts},\mfCT{\co}{\ts+1})\bar{T}_{\ifpiC{\co}}(\mfCT{\co}{\ts+1}|\langle\mfCT{\co}{\ts},\dsetCT{\co}{\ts+1}\rangle,\jaGT{\co}{\ts})[\Gamma_{IO}(\jpolG{\co}^{t:h-1},\jaGT{\co}{\ts},\joGT{\co}{\ts+1})](\mfCT{\co}{\ts+1})\label{eq:IS:vec-if__2}
\end{eqnarray}
where $\Gamma_{IO}(\jpolG{\co}^{t:h-1},\jaGT{\co}{\ts},\joGT{\co}{\ts+1})$
is the IO vector that corresponds to $\jpolG{\co}^{t+1:h-1}=\jpolG{\co}^{t:h-1}\Downarrow_{a,o}$
.

Now we define the r.h.s. vector:
\begin{multline}
\nu_{IO}(\mfCT{\co}{\ts})=r_{a}(\mfCT{\co}{\ts})+\gamma\sum_{\joT{\ts+1}}\nu_{IO}{}^{\pi|ao}(\mfCT{\co}{\ts})\\
=r_{a}(\mfCT{\co}{\ts})+\gamma\sum_{\joT{\ts+1}}\left[\max_{\ifsCT{\co}{\ts+1}}\sum_{\mfCT{\co}{\ts+1}}O(\joGT{\co}{\ts+1}|\jaG{\co},\mfCT{\co}{\ts+1})\Pr(\mfnCT{\co}{\ts+1}|\mfCT{\co}{\ts},\jaG{\co},\ifsCT{\co}{\ts+1})\Pr(\mflCT{\co}{\ts+1}|\mfCT{\co}{\ts},\jaG{\co})[\Gamma_{IO}(\jpolG{\co}^{t:h-1},\jaGT{\co}{\ts},\joGT{\co}{\ts+1})](\mfCT{\co}{\ts+1})\right]\label{eq:IS:vec-IO__1}
\end{multline}
We need to show that 
\begin{multline*}
\max_{\dsetCT{\co}{\ts+1}}\left[r_{a}(\mfCT{\co}{\ts})+\gamma\sum_{\joT{\ts+1}}\sum_{\mfCT{\co}{\ts+1}}\bar{O}(\joT{\ts+1}|\jaT{\ts},\mfCT{\co}{\ts+1})\bar{T}_{\ifpiC{\co}}(\mfCT{\co}{\ts+1}|\langle\mfCT{\co}{\ts},\dsetCT{\co}{\ts+1}\rangle,\jaGT{\co}{\ts})[\Gamma_{IO}(\jpolG{\co}^{t:h-1},\jaGT{\co}{\ts},\joGT{\co}{\ts+1})](\mfCT{\co}{\ts+1})\right]\\
\leq r_{a}(\mfCT{\co}{\ts})+\gamma\sum_{\joT{\ts+1}}\max_{\ifsCT{\co}{\ts+1}}\sum_{\mfCT{\co}{\ts+1}}O(\joGT{\co}{\ts+1}|\jaG{\co},\mfCT{\co}{\ts+1})\Pr(\mfnCT{\co}{\ts+1}|\mfCT{\co}{\ts},\jaG{\co},\ifsCT{\co}{\ts+1})\Pr(\mflCT{\co}{\ts+1}|\mfCT{\co}{\ts},\jaG{\co})[\Gamma_{IO}(\jpolG{\co}^{t:h-1},\jaGT{\co}{\ts},\joGT{\co}{\ts+1})](\mfCT{\co}{\ts+1})
\end{multline*}
which holds if and only if
\begin{multline}
\max_{\dsetCT{\co}{\ts+1}}\sum_{\joT{\ts+1}}\sum_{\mfCT{\co}{\ts+1}}\bar{O}(\joT{\ts+1}|\jaT{\ts},\mfCT{\co}{\ts+1})\bar{T}_{\ifpiC{\co}}(\mfCT{\co}{\ts+1}|\langle\mfCT{\co}{\ts},\dsetCT{\co}{\ts+1}\rangle,\jaGT{\co}{\ts})[\Gamma_{IO}(\jpolG{\co}^{t:h-1},\jaGT{\co}{\ts},\joGT{\co}{\ts+1})](\mfCT{\co}{\ts+1})\\
\leq\sum_{\joT{\ts+1}}\max_{\ifsCT{\co}{\ts+1}}\sum_{\mfCT{\co}{\ts+1}}O(\joGT{\co}{\ts+1}|\jaG{\co},\mfCT{\co}{\ts+1})\Pr(\mfnCT{\co}{\ts+1}|\mfCT{\co}{\ts},\jaG{\co},\ifsCT{\co}{\ts+1})\Pr(\mflCT{\co}{\ts+1}|\mfCT{\co}{\ts},\jaG{\co})[\Gamma_{IO}(\jpolG{\co}^{t:h-1},\jaGT{\co}{\ts},\joGT{\co}{\ts+1})](\mfCT{\co}{\ts+1}).\label{eq:IS:to-proof_2}
\end{multline}
To show this is the case, we start with the l.h.s.:
\begin{align*}
 & \max_{\dsetCT{\co}{\ts+1}}\sum_{\joT{\ts+1}}\sum_{\mfCT{\co}{\ts+1}}\bar{O}(\joT{\ts+1}|\jaT{\ts},\mfCT{\co}{\ts+1})\bar{T}_{\ifpiC{\co}}(\mfCT{\co}{\ts+1}|\langle\mfCT{\co}{\ts},\dsetCT{\co}{\ts+1}\rangle,\jaGT{\co}{\ts})[\Gamma_{IO}(\jpolG{\co}^{t:h-1},\jaGT{\co}{\ts},\joGT{\co}{\ts+1})](\mfCT{\co}{\ts+1})\\
= & \max_{\dsetCT{\co}{\ts+1}}\sum_{\joT{\ts+1}}\sum_{\mfCT{\co}{\ts+1}}\bar{O}(\joT{\ts+1}|\jaT{\ts},\mfCT{\co}{\ts+1})\left[\sum_{\ifsCT{\co}{\ts+1}}\Pr(\mfnAT i{\ts+1}|\mfCT{\co}{\ts},\jaGT{\co}{\ts},\ifsCT{\co}{\ts+1})\iffunc(\ifsCT{\co}{\ts+1}|\dsetCT{\co}{\ts+1})\right]\Pr(\mflCT{\co}{\ts+1}|\mfCT{\co}{\ts},\jaGT{\co}{\ts})[\Gamma_{IO}(\jpolG{\co}^{t:h-1},\jaGT{\co}{\ts},\joGT{\co}{\ts+1})](\mfCT{\co}{\ts+1})\\
= & \max_{\dsetCT{\co}{\ts+1}}\sum_{\joT{\ts+1}}\sum_{\ifsCT{\co}{\ts+1}}\iffunc(\ifsCT{\co}{\ts+1}|\dsetCT{\co}{\ts+1})\sum_{\mfCT{\co}{\ts+1}}\bar{O}(\joT{\ts+1}|\jaT{\ts},\mfCT{\co}{\ts+1})\Pr(\mfnAT i{\ts+1}|\mfCT{\co}{\ts},\jaGT{\co}{\ts},\ifsCT{\co}{\ts+1})\Pr(\mflCT{\co}{\ts+1}|\mfCT{\co}{\ts},\jaGT{\co}{\ts})[\Gamma_{IO}(\jpolG{\co}^{t:h-1},\jaGT{\co}{\ts},\joGT{\co}{\ts+1})](\mfCT{\co}{\ts+1})\\
= & \max_{\dsetCT{\co}{\ts+1}}\sum_{\joT{\ts+1}}\sum_{\ifsCT{\co}{\ts+1}}\iffunc(\ifsCT{\co}{\ts+1}|\dsetCT{\co}{\ts+1})\left[\sum_{\mfCT{\co}{\ts+1}}\bar{O}(\joT{\ts+1}|\jaT{\ts},\mfCT{\co}{\ts+1})\Pr(\mfnAT i{\ts+1}|\mfCT{\co}{\ts},\jaGT{\co}{\ts},\ifsCT{\co}{\ts+1})\Pr(\mflCT{\co}{\ts+1}|\mfCT{\co}{\ts},\jaGT{\co}{\ts})[\Gamma_{IO}(\jpolG{\co}^{t:h-1},\jaGT{\co}{\ts},\joGT{\co}{\ts+1})](\mfCT{\co}{\ts+1})\right]\\
\leq & \text{\{max. of a function is greater than its expectation:\}}\\
 & \max_{\dsetCT{\co}{\ts+1}}\sum_{\joT{\ts+1}}\max_{\ifsCT{\co}{\ts+1}}\left[\sum_{\mfCT{\co}{\ts+1}}\bar{O}(\joT{\ts+1}|\jaT{\ts},\mfCT{\co}{\ts+1})\Pr(\mfnAT i{\ts+1}|\mfCT{\co}{\ts},\jaGT{\co}{\ts},\ifsCT{\co}{\ts+1})\Pr(\mflCT{\co}{\ts+1}|\mfCT{\co}{\ts},\jaGT{\co}{\ts})[\Gamma_{IO}(\jpolG{\co}^{t:h-1},\jaGT{\co}{\ts},\joGT{\co}{\ts+1})](\mfCT{\co}{\ts+1})\right]\\
= & \text{\{no dependence on \ensuremath{\dsetCT{\co}{\ts+1}}anymore:\}}\\
 & \sum_{\joT{\ts+1}}\max_{\ifsCT{\co}{\ts+1}}\sum_{\mfCT{\co}{\ts+1}}\bar{O}(\joT{\ts+1}|\jaT{\ts},\mfCT{\co}{\ts+1})\Pr(\mfnAT i{\ts+1}|\mfCT{\co}{\ts},\jaGT{\co}{\ts},\ifsCT{\co}{\ts+1})\Pr(\mflCT{\co}{\ts+1}|\mfCT{\co}{\ts},\jaGT{\co}{\ts})[\Gamma_{IO}(\jpolG{\co}^{t:h-1},\jaGT{\co}{\ts},\joGT{\co}{\ts+1})](\mfCT{\co}{\ts+1})
\end{align*}
which is the r.h.s. of \eqref{eq:IS:to-proof_2}, the inequality the
we needed to demonstrate, thereby finishing the proof.\end{proof}

\twocolumn
\end{document}